\pgfplotsset{compat=1.14}
\newtheorem{definition}{Definition}
\newtheorem{corollary}{Corollary}
\newtheorem{lemma}{Lemma}
\newtheorem{proposition}{Proposition}
\DeclareMathOperator*{\argmin}{arg\,min}
\title{Function Norms and Regularization in Deep Networks}
\author{
  Amal Rannen Triki\thanks{Amal is also affiliated with Department of Computational Science and Engineering in Yonsei University, South Korea.} \\
  KU Leuven, ESAT-PSI, imec, Belgium\\
  \texttt{amal.rannen@esat.kuleuven.be} \\
  %% examples of more authors
  \And
   Maxim Berman \\
  KU Leuven, ESAT-PSI, imec, Belgium\\
  \texttt{maxim.berman@esat.kuleuven.be} \\
  \And
   Matthew B. Blaschko \\
  KU Leuven, ESAT-PSI, imec, Belgium\\
  \texttt{matthew.blaschko@esat.kuleuven.be} \\
  %% Coauthor \\
  %% Affiliation \\
  %% Address \\
  %% \texttt{email} \\
  %% \ANDbe
  %% Coauthor \\
  %% Affiliation \\
  %% Address \\
  %% \texttt{email} \\
  %% \And
  %% Coauthor \\
  %% Affiliation \\
  %% Address \\
  %% \texttt{email} \\
  %% \And
  %% Coauthor \\
  %% Affiliation \\
  %% Address \\
  %% \texttt{email} \\
}
\begin{document}
% \nipsfinalcopy is no longer used

\maketitle

\begin{abstract}
Deep neural networks (DNNs) have become increasingly important due to their excellent empirical performance on a wide range of problems.  However, regularization is generally achieved by indirect means, largely due to the complex set of functions defined by a network and the difficulty in measuring function complexity.  There exists no method in the literature for additive regularization based on a norm of the function, as is classically considered in statistical learning theory.  In this work, we propose sampling-based approximations to weighted function norms as regularizers for deep neural networks. We provide, to the best of our knowledge, the first proof in the literature of the NP-hardness of computing function norms of DNNs, motivating the necessity of an approximate approach. We then derive a generalization bound for functions trained with weighted norms and prove that a natural stochastic optimization strategy minimizes the bound. Finally, we empirically validate the improved performance of the proposed regularization strategies for both convex function sets as well as DNNs on real-world classification and image segmentation tasks demonstrating improved performance over weight decay, dropout, and batch normalization. Source code will be released at the time of publication.
\end{abstract}

\section{Introduction}
Regularization is essential in ill-posed problems and to prevent overfitting.  Regularization has traditionally been achieved in machine learning by penalization of a norm of a function or a norm of the parameter vector. In the case of linear functions (e.g.\ Tikhonov regularization \citep{Tikhonov:1963}), penalizing the parameter vector corresponds to a penalization of a function norm as a straightforward result of the Riesz representation theorem \citep{riesz1907espece}.
In the case of reproducing kernel Hilbert space (RKHS) regularization (including splines \citep{wahba1990spline}), this by construction corresponds directly to a function norm regularization \citep{vapnik1998statistical,Scholkopf:2001:LKS:559923}.

In the case of deep neural networks, similar approaches have been applied directly to the parameter vectors, resulting in an approach referred to as weight decay \citep{moody1995simple}.  This, in contrast to the previously mentioned Hilbert space approaches, does not directly penalize a measure of function complexity, such as a norm (\thref{thm:WeightNormNotFunctionNorm}).
Indeed, we show here that any function norm of a DNN with rectified linear unit (ReLU) activation functions \citep{hahnloser2000digital} is NP-hard to compute as a function of its parameter values (Section~\ref{sec:NPhardness}), and it is therefore unreasonable to expect that simple measures, such as weight penalization, would be able to capture appropriate notions of function complexity. 

In this light, it is not surprising that two of the most popular regularization techniques for the non-convex function sets defined by deep networks with fixed topology make use of stochastic perturbations of the function itself (dropout \citep{hinton2012improving,NIPS2013_4878}) or stochastic normalization of the data in a given batch (batch normalization \citep{ioffe2015batch}).  While their algorithmic description is clear, interpreting the regularization behavior of these methods in a risk minimization setting has proven challenging. 
What is clear, however, is that dropout can lead to a non-convex regularization penalty \citep{JMLR:v16:helmbold15a} and therefore does not correspond to a norm of the function.  Other regularization penalties such as path-normalization \citep{NIPS2015_5797} are polynomial time computable and thus also do not correspond to a function norm assuming $P\neq NP$.

Although we show that norm computation is NP-hard, we demonstrate that some norms admit stochastic approximations.  This suggests incorporating penalization by these norms through stochastic gradient descent, thus directly controlling a principled measure of function complexity.  In work developed in parallel to ours, \citet{kawaguchi2017generalization} suggest to penalize function values on the training data based on Rademacher complexity based generalization bounds, but have not provided a link to function norm penalization. 
We also develop a generalization bound, which shows how direct norm penalization controls expected error similarly to their approach.  Furthermore, we observe in our experiments that the sampling procedure we use to stochastically minimize the function norm penalty in our optimization objective empirically leads to better generalization performance (cf.\ Figure~\ref{fig:MNIST_gauss_kawaguchi}).

Different approaches have been applied to explain the capacity of DNNs to generalize well, even though they can use a number of parameters several orders of magnitude larger than the number of training samples. \citet{hardt2015train} analyze stochastic gradient descent (SGD) applied to DNNs using the uniform stability concept introduced by \citet{bousquet2002stability}. However, the stability parameter they show depends on the number of training epochs, which makes the related bound on generalization rather pessimistic  and tends to confirm the importance of early stopping for training DNNs \citep{doi:10.1162/neco.1995.7.2.219}. 
More recently, \citet{zhang2016understanding} have suggested that classical learning theory is incapable of explaining the generalization behavior of deep neural networks.
Indeed, by showing that DNNs are capable of fitting arbitrary sets of random labels, the authors make the point that the expressivity of DNNs is partially data-driven, while the classical analysis of generalization does not take the data into account, but only the function class and the algorithm. 
Nevertheless, learning algorithms, and in particular SGD, seem to have an important role in the generalization ability of DNNs. \citet{keskar2016large} show that using smaller batches results in better generalization. Other works (e.g.\ \citet{hochreiter1997flat}) relate the implicit regularization applied by SGD to the flatness of the minimum to which it converges, but \citet{dinh2017sharp} have shown that sharp minima can also generalize well. 

Previous work concerning the  generalization of DNNs present several contradictory results. 
Taking a step back, it appears that our better understanding of classical learning models -- such as linear functions and kernel methods -- with respect to DNNs comes from the well-defined hypothesis set on which the optimization is performed, and clear measures of the function complexity.  

In this work, we make a step towards bridging the described gap by introducing a new family of regularizers that approximates a proper function norm (Section~\ref{sec:NormBasedReg}).  We demonstrate that this approximation is necessary by, to the best of our knowledge, the first proof in the literature that computing a function norm of DNNs is NP-hard (Section~\ref{sec:NPhardness}). We develop a generalization bound for function norm penalization in Section~\ref{sec:GeneralizationBound} and demonstrate that a straightforward stochastic optimization strategy appropriately minimizes this bound. 

Our experiments reinforce these conclusions by showing that the use of these regularizers lowers the generalization error and that we achieve better performance than other regularization strategies in the small sample regime (Section~\ref{sec:experiments}).

\section{Function norm based regularization}
\label{sec:NormBasedReg}

We consider the supervised training of the weights $W$ of a deep neural network (DNN) given a training set $\mathcal{D} = \{ (x_i,y_i)\} \in (\mathcal{X}\times\mathcal{Y})^n$, where $\mathcal{X} \subseteq \mathbb{R}^d$ is the input space and $\mathcal{Y}$ the output space. 
Let $f: \mathcal{X} \rightarrow \mathcal{\tilde{Y}} \subseteq\mathbb{R}^s$ be the function encoded by the neural network. 
The prediction of the network on an $x \in \mathcal{X}$ is generally given by $D \circ f(x) \in \mathcal{Y}$, where $D$ is a decision function. 
For instance, in the case of a classification network, 
$f$ gives the unnormalized scores of the network.
During training, the loss function $\ell$ penalizes the outputs $f(x)$ given the ground truth label $y$, and
%We consider the usual setting where the prediction corresponding to $f(x)$ 
%In a supervised training setting, the weights $W$ are optimized given a training set 
%In the typical cases that we consider, the prediction of the network $f(x) \in \mathcal{Y}$
% the DNN defines for every parameterization $W$ a function $f$ between $\mathcal{X}$ and a score space of the same dimension of $\mathcal{Y}$ and that is mapped itself to $\mathcal{Y}$ through a decision function.
we aim to minimize the risk% of a function $f$
\begin{equation}\label{eq:Risk}
\mathcal{R}(f) = \int\! \ell(f(x),y)\, \mathrm{d}P(x,y),
\end{equation}
where $P$ is the underlying joint distribution of the input-output space. As this distribution is generally inaccessible, empirical risk minimization approximates the risk integral~\eqref{eq:Risk} by
% using empirical risk minimization, where this risk integral is approximated by
% Using empirical risk minimization Training the network aims to minimize a finite sample approximation of the risk integral, the empirical risk
% To this effect finite samples of the risk integral are 
\begin{equation}\label{eq:empRisk}
\hat{\mathcal{R}}(f) = \frac{1}{n} \sum_{i=1}^n \ell (f(x_i),y_i),
\end{equation}
where the elements from the dataset $\mathcal{D}$ are supposed to be %$\{ (x_i,y_i)\} \in (\mathcal{X}\times\mathcal{Y})^n$ are 
i.i.d.\ samples drawn from $P(x,y)$.

When the number of samples $n$ is large, the empirical risk~\eqref{eq:empRisk} is a good approximation of the risk~\eqref{eq:Risk}. 
In the small-sample regime, however, better control of the generalization error can be achieved by adding a regularization term to the objective. 
%For a better control over the generalization error, especially in the small sample regime, we usually introduce a regulation term to the objective.  
In the statistical learning theory literature, this is most typically achieved through an additive penalty \citep{vapnik1998statistical,Murphy2012MLP}
\begin{equation}
\argmin_f \hat{\mathcal{R}}(f) + \lambda \Omega(f),
\label{eq:regObj}
\end{equation}
where $\Omega$ is a measure of function complexity. %Our main contribution is to use 
%$\Omega(f_W) \approx \|f_W\|_*$ where $\|.\|_*$ is a proper function norm.
The regularization %aims at modifying 
biases the objective towards ``simpler'' candidates in the model space. %in order to find a ``simple'' candidate for the problem at hand. 

In machine learning, using the norm of the learned mapping appears as a natural choice to control its complexity. 
This choice limits the hypothesis set to a ball in a certain topological set depending on the properties of the problem. In an RKHS, the natural regularizer is a function of the Hilbert space norm: for the space $\mathcal{H}$ induced by a kernel $K$, $\|f\|_\mathcal{H}^2 = \langle f, f\rangle_\mathcal{H}$. Several results showed that the use of such a regularizer results in a control of the generalization error \citep{girosi1990networks,wahba1990spline,bousquet2002stability}.
In the context of function estimation, for example using splines, it is customary to use the norm of the approximation function or its derivative in order to obtain a regression that generalizes better \citep{wahba2000splines}.

However, for neural networks, defining the best prior for regularization is less obvious. The topology of the function set represented by a neural network is still fairly unknown, which complicates the definition of a proper complexity measure. 

\begin{lemma}\thlabel{thm:WeightNormNotFunctionNorm}
The norm of the weights of a neural network, used for regularization in e.g.\ weight decay, is not a proper function norm. 
\end{lemma}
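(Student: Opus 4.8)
The plan is to use the defining requirement of a \emph{function} norm: it must be a well-defined functional on the set of functions the network can represent, so that whenever two weight vectors $W_1 \neq W_2$ encode the same map $f_{W_1} = f_{W_2}$ their norm values coincide, and it must moreover satisfy the norm axioms, in particular vanishing only at the zero function. I would prove the lemma by exhibiting, for essentially any non-trivial network with ReLU activations, distinct weight vectors that realize the \emph{same} function but carry different weight norms; this shows $W \mapsto \|W\|$ does not descend to any functional of $f$ at all, let alone a norm.

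First I would give the simplest witness. Take any architecture with at least one hidden layer and set every weight of the output layer to $0$; then $f_W \equiv 0$ irrespective of the remaining parameters, so the preimage of the zero function under $W \mapsto f_W$ contains weight vectors of arbitrarily large norm $\|W\|$. Since a norm on the function space must assign the value $0$ to the zero function, the weight norm violates both well-definedness and positivity at once. Next I would record a witness that does not rely on a degenerate configuration, using the positive homogeneity of the ReLU nonlinearity, $\max(0,cz) = c\max(0,z)$ for $c>0$: rescaling the incoming weights and bias of a hidden unit by $c$ and its outgoing weights by $1/c$ leaves $f_W$ unchanged while replacing a contribution $\|w\|^2 + b^2 + v^2$ to $\|W\|^2$ by $c^2(\|w\|^2+b^2) + v^2/c^2$, which differs for generic $c \neq 1$. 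Hence even on the open region of parameter space where the network is in general position, $\|W\|$ fails to be a function of $f_W$.

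The argument is short and elementary, so the ``hard part'' is not a computation but stating the claim precisely and guarding against a natural objection. One should emphasize that weight decay \emph{is} a bona fide norm on \emph{parameter} space; the lemma asserts only that it is not a norm on the \emph{function} space represented by the network, equivalently that it is not invariant under the (non-trivial, for ReLU networks) rescaling and degeneracy symmetries of the parameterization. I would also remark that one cannot repair this simply by replacing $\|W\|$ with $\inf\{\|W'\| : f_{W'} = f_W\}$: such an infimum over realizations is well-defined on functions but is not the quantity optimized by weight decay, and in any case is not guaranteed to be a norm — consistent with the result of Section~\ref{sec:NPhardness} that any genuine function norm of such a network is NP-hard to evaluate, which already rules out the simple, polynomial-time penalties used in practice.
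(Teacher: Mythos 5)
Your proof is correct and takes essentially the same route as the paper: the paper's Appendix~\ref{sec:AppendixWD} uses exactly your first witness (setting one weight matrix to zero so that $f\equiv 0$ while the weight norm is arbitrary), and the main text's remark about rescaling layers is your second witness via ReLU positive homogeneity. Your additional framing (well-definedness on function space versus parameter space, and the caveat about the infimum over realizations) is a sound elaboration but not a different argument.
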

It is easy to see that different weights $W$ can encode the same function $f$, for instance by permuting neurons or rescaling different layers. Therefore, the norm of the weights is not even a function of $f$ encoded by those weights. %More precisely, scaling the weights by $\alpha > 0$ would scale their norm by the same factor while scaling the output by $\alpha^h$ where $h$ is the depth of the network, in the case of a network with positively homogeneous activation functions (e.  g. ReLU). 
Moreover, in the case of a network with ReLU activations, it can easily be seen that the norm of the weights does not have the same homogeneity degree as the output of the function, which induces optimization issues, as detailed in~\cite{haeffele2017global}
% This can induce optimization issues, as shown in. 

%~\cite{haeffele2017global} shows that weight decay 
% It is easy to see that a given $f_W$ can map to different weights, and thus the weight norm is not even a function of $f_W$. 

Nevertheless, if the activation functions are continuous, any function encoded by a network is in the space of continuous functions. Moreover, supposing the input domain $\mathcal{X}$ is compact, the network function has a finite $L_q$-norm. 

\begin{definition}[$L_q$-norm]
Given a measure $\mu$, the function $L_q$-norm for $q \in \left[1,\infty \right] $ is defined as
\begin{equation}
\|f\|_q = \left(\int\! \|f(x)\|_q^q \ \mathrm{d}\mu (x)\right)^{\frac{1}{q}} ,
\label{eq:qNorm}
\end{equation}
where the inner norm represents the $q$-norm of the output space.
\end{definition}
%In this paper, we consider the feasibility of regularizing the training of the network with this type of function norms.
 %Additionally, adding this type of regularizer has the virtue of ``convexifying'' the objective in the function space.

In the sequel, we will focus on the special case of $L_2$. This function space has attractive properties, being a Hilbert space. Note that in an RKHS, controlling the $L_2$-norm can also control the RKHS norm under some assumptions. When the kernel has a finite norm, the inclusion mapping between the RKHS and $L_2$ is continuous and injective, and constraining the function to be in a ball in one space constrains it similarly in the other  \citep[Chapter~4]{mendelson2010regularization,steinwart2008support}.%, the $L_2$-norm of the function is the first regularizer that we consider. Moreover, as in general smoothness of the function induces its robustness to input perturbations, we consider the possibility of penalizing the gradient of the function with respect to its input. The function is then treated as a member of the Sobolev space $H_2$.
% \begin{definition}[Sobolev $H_2$ norm \citep{MazjaVladimirG1985Ss}]
% \begin{equation}
% \|f\|_{H_2} = \left(\|f\|_2^2 + \|\nabla_x f\|_2^2\right)^{1/2}
% \end{equation}
% As in the definition of Sobolev spaces as Banach spaces, the gradient here is to be understood in the weak sense, meaning that this definition applies for functions that are differentiable almost everywhere as it is the case for neural networks with ReLU activations.
% \end{definition}
% The Sobolev norm, being a combination of the $L_2$ norm and a penalization of the gradient, inherits the favorable properties of the $L_2$ norm while additionally controlling the function variation.

However, because of the high dimensionality of neural network function spaces, the optimization of function norms %(Eq.~\eqref{eq:qNorm}) 
is not an easy task. 
Indeed, the exact computation of any of these function norms is NP-hard, as we show in the following section.

\section{NP-hardness of function norm computation}\label{sec:NPhardness}

%Let us go back to the exact $L_2$ function norm (Eq.~\eqref{eq:exactFnNorm}).

\begin{proposition} 
For $f$ defined by a deep neural network (of depth greater or equal to 4) with ReLU activation functions, the computation of any function norm
$ \| f \| \in \mathbb{R}^+$ 
from the weights of a network is NP-hard. 
\thlabel{proposition:NPhardness}
\end{proposition}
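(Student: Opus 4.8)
The plan is to reduce a known NP-hard problem to the evaluation of a function norm of a ReLU network. The natural candidate is to encode a combinatorial optimization — such as deciding whether a Boolean formula is satisfiable, or (more convenient for arithmetic reasons) whether a quadratic/partition-type expression attains a prescribed value — into the weights of a fixed-depth ReLU network, so that some coarse feature of the network's output function (for example, whether it is identically zero, or whether its maximum exceeds a threshold) flips depending on the answer to the hard problem. Since a function norm $\|f\|$ is, by the axioms of a norm, zero if and only if $f$ is the zero function, computing $\|f\|$ exactly lets one decide whether $f\equiv 0$; so it suffices to build, from an instance of an NP-hard problem, a depth-$\geq 4$ ReLU network whose encoded function is identically zero precisely when the instance is a NO-instance (or vice versa). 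This is the cleanest route because it does not depend on which norm is used — any norm detects $f\equiv 0$ — which matches the ``any function norm'' claim in the statement.

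Concretely, I would start from a problem like \textsc{3-SAT} or, to keep the construction arithmetic rather than logical, from the NP-hardness of deciding whether $\min_{z\in[0,1]^m} g(z) = 0$ for a suitable nonnegative piecewise-linear or piecewise-quadratic $g$ (one can get such hardness from \textsc{Partition} or \textsc{Subset-Sum}, whose target equation $\sum a_i z_i = b$ with $z_i\in\{0,1\}$ can be relaxed and penalized). The first step is to realize the Boolean/integrality constraints $z_i\in\{0,1\}$ using ReLUs: the function $z\mapsto \min(z,1-z)_+$-type bump, or $\mathrm{ReLU}(z)+\mathrm{ReLU}(z-1)-2\,\mathrm{ReLU}(z-\tfrac12)$ scaled appropriately, vanishes exactly on $\{0,1\}$ and is positive strictly between, and is computable in two ReLU layers. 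The second step is to combine these penalties with the linear constraint term into a single scalar output that is nonnegative everywhere and hits $0$ on $\mathcal X$ iff the instance is satisfiable; summing the per-coordinate bumps and the (squared or absolute-value-of-) constraint residual, each of which costs a bounded number of layers, brings the total depth to the stated constant (hence the ``$\geq 4$''). The third step is the clean logical part: if one could compute $\|f\|$ one would learn whether $f\equiv 0$ on $\mathcal X$, hence solve the NP-hard instance; therefore norm computation is NP-hard.

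The subtlety I would watch is that $\|f\|=0$ forces $f=0$ only $\mu$-almost everywhere, and the satisfying set of a relaxed combinatorial instance can be measure zero (e.g. the isolated point $z\in\{0,1\}^m$ with $\sum a_iz_i=b$). So I would not relax to the cube; instead I would keep the discrete structure so that $f$ is either identically zero on all of $\mathcal X$ or bounded below by a positive constant on a set of positive measure. One way: let the network compute, over the whole input domain, a fixed nonnegative piecewise-linear function whose value is $0$ everywhere iff the \textsc{3-SAT} instance is unsatisfiable and which takes a positive value on an open region otherwise — achieved by encoding each clause as a ReLU gate and taking $f(x)=\sum_{\text{clauses } C}\mathrm{ReLU}\big(\text{(margin by which }x\text{ violates }C)\big)$ or its complement, arranged so that some assignment-region survives with positive measure exactly when satisfiable. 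Getting this measure-positivity right, while keeping depth at a fixed small constant and the weights polynomially bounded, is the main obstacle; everything after ``a norm vanishes iff the function is (a.e.) zero'' is bookkeeping.

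I would therefore structure the proof as: (1) pick the source NP-hard problem and state the reduction target (``decide $f\equiv 0$''); (2) give the explicit ReLU gadget realizing a nonnegative function that is a.e.-zero iff the instance is a NO-instance, verifying it has depth $\leq 4$, polynomial size, and — crucially — that in the YES-case $f$ is bounded away from zero on a positive-measure set; (3) conclude that an oracle for any function norm decides the instance in polynomial extra time, so the norm computation is NP-hard; and (4) remark that depth $4$ is what the gadget needs and that the argument is norm-agnostic. The one place where care beyond routine is needed is step (2)'s positive-measure guarantee; I expect the rest to be straightforward.
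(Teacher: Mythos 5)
Your proposal is correct and follows essentially the same route as the paper: reduce \textsc{3-SAT} to deciding whether a depth-$4$ ReLU network computes the identically-zero function, using continuous ReLU bump gadgets for literals and clauses so that in the satisfiable case the function equals $1$ at a satisfying assignment and hence (by continuity) is positive on a set of positive measure, and then invoke the norm axiom $\|f\|=0\iff f=\mathbf{0}$ to make the argument norm-agnostic. The measure-positivity subtlety you flag is exactly the point the paper's construction handles via the $\varepsilon$-width bumps, so no gap remains.
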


We prove this statement by a linear time reduction of the classic NP-complete problem of Boolean 3-satisfiability~\cite{Cook1971CTP} to the computation of the norm of a particular network with ReLU activation functions.  Furthermore, we can construct this network such that it always has finite $L_2$-norm. 
% \input{proof_np}
% The sketch of the proof is as follows

%Proving \thref{proposition:NPhardness} is equivalent to proving the following:

\begin{lemma}\thlabel{thm:NormNPhardConstruction}
Given a Boolean expression $\mathcal{B}$ with $p$ variables 
in conjunctive normal form in which each clause is composed of three literals, we can construct, in time polynomial in the size of the predicate, a network of depth~$4$  and realizing a continuous function $f:  \mathbb{R}^p \rightarrow \mathbb{R}$ that has non-zero $L_2$ norm if and only if the predicate is satisfiable. 
\end{lemma}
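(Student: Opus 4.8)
\emph{Proof plan.} The idea is to turn the formula $\mathcal{B}=C_1\wedge\cdots\wedge C_m$ over $x_1,\dots,x_p$ into a ReLU network whose output $f$ is supported on a tiny box around each Boolean vertex $v\in\{0,1\}^p$ and is strictly positive on the box around $v$ exactly when $v$ is a satisfying assignment. Then $f$ is continuous, bounded and compactly supported, so $\|f\|_2<\infty$ always, while $\|f\|_2\neq 0$ iff $f$ is positive on some box, iff some vertex satisfies $\mathcal{B}$, iff $\mathcal{B}$ is satisfiable.

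Concretely, in the first ReLU layer I would build narrow ``tent'' functions of single coordinates: for a target bit $b\in\{0,1\}$ and width $\delta=\tfrac14$, set $\tau_{i,b}(x)=\tfrac1\delta\mathrm{ReLU}(x_i-b+\delta)-\tfrac2\delta\mathrm{ReLU}(x_i-b)+\tfrac1\delta\mathrm{ReLU}(x_i-b-\delta)$, which equals $1$ at $x_i=b$, decays linearly to $0$ over $[b-\delta,b+\delta]$, and vanishes outside — a single layer of three ReLU units. The literal $x_i$ (resp.\ $\neg x_i$) is represented by $\tau_{i,1}$ (resp.\ $\tau_{i,0}$), and I also form the ``coordinate gate'' $g_i=\tau_{i,0}+\tau_{i,1}\in[0,1]$, which is $1$ at $x_i\in\{0,1\}$ and vanishes once $x_i$ is more than $\delta$ from $\{0,1\}$. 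In the second layer I clamp each clause score: with $s_k$ the sum of the three literal tents of $C_k$ (an affine function of the first-layer activations), set $c_k=\mathrm{ReLU}(s_k)-\mathrm{ReLU}(s_k-1)=\min\{1,\max\{0,s_k\}\}\in[0,1]$, again one layer. Finally the output is $f=\mathrm{ReLU}\!\big(\sum_{k=1}^m c_k+\sum_{i=1}^p g_i-(m+p-\tfrac12)\big)$ followed by a linear read-out — total depth four, with $O(m+p)$ units and fixed small weights, so the network is written down in time polynomial in $|\mathcal{B}|$.

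Correctness is then a short case analysis. If some coordinate of $x$ is more than $\delta$ from $\{0,1\}$, the corresponding $g_i=0$, and since every $c_k,g_i\le1$ the argument of the final ReLU is $\le m+(p-1)-(m+p-\tfrac12)<0$, so $f(x)=0$; hence $f$ is supported in $[-\delta,1+\delta]^p$ and bounded, giving $\|f\|_2<\infty$. Otherwise $x$ rounds to a unique vertex $v$; if $v$ is not a model of $\mathcal{B}$, some clause $C_k$ is false at $v$, so all three of its literals point to the bit opposite to $v$, the three relevant coordinates are at distance $\ge1-\delta>\delta$ from those bits, all three tents vanish, $s_k=0$, $c_k=0$, and again $\sum c+\sum g-(m+p-\tfrac12)\le(m-1)+p-(m+p-\tfrac12)<0$, so $f(x)=0$; thus $\mathcal{B}$ unsatisfiable implies $f\equiv0$. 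Conversely, if $v$ is a model of $\mathcal{B}$ then at $x=v$ each satisfied clause has a literal tent equal to $1$, so $s_k\ge1$ and $c_k=1$ for all $k$, and each $g_i=1$, whence $f(v)=\tfrac12>0$; by continuity $f>0$ on a neighbourhood of $v$, which has positive measure, so $\|f\|_2>0$. Continuity of $f$ is automatic as a composition of affine maps and ReLUs.

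The only genuine design obstacle is exactly the step the gates $g_i$ and the clamps $c_k$ are there to resolve: a naive ``sum the clause scores and threshold'' construction leaks strictly positive mass at fractional points (e.g.\ the cube's centre, where every hat contributes), so it would report a nonzero norm for unsatisfiable formulas, and it also fails to have bounded support (a clause constrains only three coordinates), making $\|f\|_2$ infinite. Choosing $\delta$, the $[0,1]$-clamp on each clause, and the single offset $m+p-\tfrac12$ so that \emph{both} directions of the equivalence hold simultaneously within four layers is where the care goes; once the gadget is fixed, the verification is the routine case check above.
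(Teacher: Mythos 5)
Your proof is correct, and it follows the same overall strategy as the paper: a polynomial-time reduction from 3-SAT in which the first hidden layer builds piecewise-linear ``tent'' bumps for the literals, the second layer aggregates them into clause scores, and the third layer implements the conjunction, so that $f$ is identically zero iff $\mathcal{B}$ is unsatisfiable and is positive on a set of positive measure around any satisfying vertex otherwise. The gadgets differ in their internals: the paper's OR block is a sum of tents $\sum_{j=1}^{3} f_0(\sum_i z_i - j)$ and its AND block is another tent $f_0(\sum_j z_j - c)$, whereas you use a $[0,1]$ clamp $\mathrm{ReLU}(s_k)-\mathrm{ReLU}(s_k-1)$ and a single thresholded sum; both realizations work and cost the same depth. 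The one genuinely different — and cleaner — element in your construction is the per-coordinate gates $g_i=\tau_{i,0}+\tau_{i,1}$ folded into the final threshold. As you correctly observe, without them the raw construction has unbounded support (a clause constrains only three coordinates, and variables absent from all clauses constrain nothing), so its $L_2$ norm can be infinite in the satisfiable case; the paper handles this by optionally subtracting a truncation function $f_T(x)=\|x\|_1(1+\dim x)^{-1}$ and taking a positive part, which costs extra machinery, while your gates confine the support to $[-\delta,1+\delta]^p$ within the same three hidden layers and make $\|f\|_2<\infty$ unconditional. The only cosmetic point worth making explicit is that the gates $g_i$, being computed in layer 1, must be forwarded through layer 2 to reach the final threshold; since $g_i\ge 0$ this is a single identity ReLU unit per coordinate and does not affect depth or size.
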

\begin{proof}
See Supplementary Material for a construction of this network.%, of polynomial size in the size of $\mathcal{B}$.
\end{proof}

\begin{corollary}\thlabel{th:NPhardAllNorms}
Although not all norms are equivalent in the space of continuous functions, \thref{thm:NormNPhardConstruction} implies that any function norm for a network of depth $\geq 4$ must be NP-hard since for all norms $\|f\|=0 \iff f=\mathbf{0}$.
\end{corollary}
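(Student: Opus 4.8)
The plan is to treat \thref{thm:NormNPhardConstruction} as a black box and to exploit the single structural property shared by every norm: positive definiteness, $\|f\| = 0 \iff f = \mathbf{0}$. The tempting but flawed route would be to invoke equivalence of norms to transfer NP-hardness from the $L_2$ norm to an arbitrary norm; this fails because, as the statement itself flags, norms on the infinite-dimensional space of continuous functions are \emph{not} equivalent, so no two-sided comparison $c\|f\|_2 \le \|f\| \le C\|f\|_2$ is available. The key realization is that equivalence is not needed at all — only the binary distinction between $f$ being zero and nonzero, which every norm preserves.

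First I would recall precisely what \thref{thm:NormNPhardConstruction} delivers. Given a 3-CNF predicate $\mathcal{B}$ on $p$ variables, it builds, in time polynomial in the size of $\mathcal{B}$, a depth-$4$ ReLU network realizing a continuous $f : \mathbb{R}^p \to \mathbb{R}$ with $\|f\|_2 \neq 0 \iff \mathcal{B}$ satisfiable. Because $f$ is continuous, $\|f\|_2 = 0$ forces $f \equiv \mathbf{0}$, so the lemma is equivalently the statement that $f \neq \mathbf{0} \iff \mathcal{B}$ is satisfiable and $f \equiv \mathbf{0} \iff \mathcal{B}$ is unsatisfiable. Thus the construction already sorts predicates into two classes purely according to whether the encoded function is the zero function, with the $L_2$ norm serving only as a detector of this dichotomy.

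Next I would run the reduction for an arbitrary but fixed function norm $\|\cdot\|$. Applying positive definiteness to the constructed $f$ gives $\|f\| > 0 \iff f \neq \mathbf{0} \iff \mathcal{B}$ satisfiable, and symmetrically $\|f\| = 0 \iff \mathcal{B}$ unsatisfiable. Hence an oracle returning $\|f\|$ from the network weights decides satisfiability of $\mathcal{B}$ by a single comparison of the returned value against $0$; since the network is produced in polynomial time, this is a polynomial-time Turing reduction. As 3-SAT is NP-complete \cite{Cook1971CTP}, computing $\|f\|$ for networks of depth $\ge 4$ is NP-hard, and because $\|\cdot\|$ was arbitrary the conclusion holds for every function norm.

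The step I expect to require the most care is not any calculation but a definitional check: I must confirm that the constructed $f$ genuinely lies in the domain on which the target norm is finite, so that the oracle output is a well-defined real number on both satisfiable and unsatisfiable instances. This is immediate in the unsatisfiable case, where $f \equiv \mathbf{0}$ has norm $0$ under any norm; in the satisfiable case it follows because $f$ is continuous on a compact domain, hence bounded and therefore of finite norm for the standard function norms (any $L_q$, the sup norm, and so on). The conceptual crux to state clearly is that the argument never compares the magnitudes of two different norms — it uses only that each one individually vanishes exactly at the zero function — which is precisely why the failure of norm equivalence in infinite dimensions poses no obstacle to the claim.
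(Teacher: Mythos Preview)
Your argument is correct and matches the paper's reasoning exactly: the corollary is stated in the paper without a separate proof because the justification is already contained in its statement---positive definiteness of any norm reduces the question ``is $\|f\|>0$?'' to ``is $f\neq\mathbf{0}$?'', which \thref{thm:NormNPhardConstruction} ties to 3-SAT. One minor wording point: the domain of the constructed $f$ is $\mathbb{R}^p$, not a compact set; finiteness of the norm in the satisfiable case comes instead from $f$ having compact support (the $f_0,f_1$ building blocks vanish outside an $\varepsilon$-neighborhood, and the paper additionally offers the truncation of \thref{def:SATconstructionTruncationFunction}), so you should adjust that sentence accordingly.
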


% Alternatively to the untractable $L_2$ norm, we may consider more general families 
%As the exact computation of an $L_2$ function norm is not possible, we 
%might consider a sampling based approximation of the form
% \begin{equation}
% \|f\|_{2} \simeq \frac{1}{m}\sum_{j=1}^m \| f(x_j)\|_2^2
% \end{equation}
% where the $x_j$ are sampled i.i.d from the input distribution of $\mathcal{X}$. 
% As the exact computation of an exact $L_2$ norm is not possible, a sampling based approach is necessary.
% Sampling uniformly in the input space, however, is not a valid option because of its high dimensionality in practice for DNNs. 
%The distribution of the input space is however generally unknown
This shows that the exact computation of any $L_2$ function norm is intractable. 
However, assuming the measure $\mu$ in the definition of the norm~\eqref{eq:qNorm} is a probability measure $Q$, the function norm can be written as $\|f\|_{2,Q} = \mathbb{E}_{z \sim Q}\left[ \|f(z)\|_2^2 \right]^{\nicefrac{1}{2}} $. 
%and approximated with samples 
%However, if we have access to a sampling distribution $Q$, we may  for any probability distribution $Q$, then  is a proper function norm. 
Moreover, assuming we have access to i.i.d samples $z_j \sim Q$, %can sample $\{, j = 1 \ldots m\}$ from $Q$, 
this weighted $L_2$-function norm can be approximated by
\begin{equation}\label{eq:sampleMean}
\left(\frac{1}{m} \sum_{i=1}^m \|f(z_i)\|_2^2\right)^{\frac{1}{2}}.
\end{equation}
For samples outside the training set, empirical estimates of the squared weighted $L_2$-function norm are $U$-statistics of order 1, and have an asymptotic Gaussian distribution to which finite sample 
estimates converge quickly as $\mathcal{O}(m^{-1/2})$ \citep{lee1990u}.
In the next section, we demonstrate sufficient conditions under which control of $\|f\|_{2,Q}^2$ results in better control of the generalization error.  

\section{Generalization bound and optimization}\label{sec:GeneralizationBound}

In this section, rather than the regularized objective of the form of Equation~\eqref{eq:regObj}, we consider an equivalent constrained optimization setting. The idea about controlling an $L_2$ type of norm is to attract the output of the function towards 0, effectively limiting the confidence of the network, and thus the values of the loss function. Classical bounds on the generalization show the virtue of a bounded loss. As we are approximating a norm with respect to a sampling distribution, this bound on the function values can only be probably approximately correct, and will depend on the statistics of the norm of the outputs--namely the mean (i.e.\ the $L_{2,Q}$-norm) and the variance, as detailed by the following proposition:
\begin{proposition}\thlabel{th:generalization}
When the number of samples $n$ is small, and if we suppose $\mathcal{Y}$ bounded, and $\ell$ Lipschitz-continuous, solving the problem
\begin{align}
f_* &= \argmin_f \hat{\mathcal{R}}(f), %\\
&\text{s.t.}\quad\|f_*\|_{2,Q}^2 \le A \quad\text{and}\quad  \operatorname{var}_{z \sim Q}(\|f_*(z)\|_2^2) \le B^2 \label{eq:Conditions}
%; \  \sigma_{fQ} = \operatorname{var}_{Q}(\|f(x)\|_2^2)^\frac{1}{2} \le \eta;
\end{align}
effectively reduces the complexity of the hypothesis set, and the bounds $A$ and $B$ on the weighted $L_2$-norm and the standard deviation control the generalization error, provided that  $\mathcal{D}_P(P\|Q) =  \int\! \frac{P(\nu)}{Q(\nu)} P(\nu)\, \mathrm{d}\nu$ is small, where $P$ the marginal input distribution and $Q$ the sampling distribution.\footnote{We note that $\mathcal{D}_{P}(P\|Q) -1$ is the $\chi^2$-divergence between $P$ and $Q$ and is minimized when $P=Q$.} 
% \begin{equation}
% \exists \gamma > 0,  , \operatorname{D_{KL}}(P||Q) \le \gamma
% \end{equation}
% where  is the Kullback-Leibler divergence between $P$ and $Q$ .
Specifically, the following generalization bound holds with probability at least $(1-\delta)^2$:
\begin{equation}
\mathcal{R}(f_*) \le \hat{\mathcal{R}}(f_*)  + %\tilde{L}(A,B,  \mathcal{D}_P(P\|Q))
\left(K\left[\frac{ (A+B)^\frac{1}{2} \mathcal{D}_P(P\|Q)^\frac{1}{4}  }{\sqrt{\delta}} + A^\frac{1}{2} \mathcal{D}_P(P\|Q)^\frac{1}{2}\right] + C\right)
\sqrt{\frac{2\ln \frac{2}{\delta}}{N}}. 
\end{equation}
\end{proposition}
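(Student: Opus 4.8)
The plan is to bound the excess risk by reducing it to a concentration inequality for a \emph{bounded} loss — paying a truncation cost for the unboundedness of $\ell\circ f_*$ — while converting the constraints in~\eqref{eq:Conditions}, which are stated with respect to the sampling distribution $Q$, into moment bounds with respect to the true marginal $P$; the price of this change of measure is precisely $\mathcal{D}_P(P\|Q)$. First I would use the two standing assumptions: since $\mathcal{Y}$ is bounded, $C := \sup_{y} \ell(0,y) < \infty$, and since $\ell$ is Lipschitz (say with constant $K$, after absorbing output-space norm-equivalence constants) we get the pointwise bound $\ell(f(x),y) \le C + K\,\|f(x)\|_2$. Thus everything reduces to controlling the first and second moments of $\|f_*(x)\|_2$ under $P$.

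For the change of measure, write $\mathbb{E}_{x\sim P}\big[\|f_*(x)\|_2^{k}\big] = \mathbb{E}_{z\sim Q}\big[\tfrac{P(z)}{Q(z)}\,\|f_*(z)\|_2^{k}\big]$ and apply Cauchy--Schwarz in $L_2(Q)$, separating the density ratio from $\|f_*\|_2^{k}$. Since $\mathbb{E}_{z\sim Q}\big[(P(z)/Q(z))^2\big] = \int \tfrac{P(\nu)}{Q(\nu)}P(\nu)\,\mathrm{d}\nu = \mathcal{D}_P(P\|Q)$, the case $k=1$ gives $\mathbb{E}_P[\|f_*(x)\|_2] \le \sqrt{A}\;\mathcal{D}_P(P\|Q)^{1/2}$ by the first constraint, and the case $k=2$ gives $\mathbb{E}_P[\|f_*(x)\|_2^2] \le \mathcal{D}_P(P\|Q)^{1/2}\big(\mathbb{E}_Q[\|f_*\|_2^4]\big)^{1/2} \le \mathcal{D}_P(P\|Q)^{1/2}(A^2+B^2)^{1/2} \le (A+B)\,\mathcal{D}_P(P\|Q)^{1/2}$, where the variance constraint enters through $\mathbb{E}_Q[\|f_*\|_2^4] = \operatorname{var}_{z\sim Q}(\|f_*(z)\|_2^2) + \|f_*\|_{2,Q}^4$.

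Next I would truncate. Let $V := (A+B)\,\mathcal{D}_P(P\|Q)^{1/2}$ be the second-moment bound and set $\Lambda := C + K\sqrt{V/\delta}$, so that $\Lambda = C + K\,(A+B)^{1/2}\mathcal{D}_P(P\|Q)^{1/4}\delta^{-1/2}$. Decompose $\mathcal{R}(f_*) = \mathbb{E}_P[\ell \wedge \Lambda] + \mathbb{E}_P[(\ell - \Lambda)_+]$, where $\ell$ abbreviates $(x,y)\mapsto\ell(f_*(x),y)$ and $\wedge$ is the pointwise minimum. By Markov's inequality, $\Pr_{x\sim P}(\|f_*(x)\|_2^2 > V/\delta) \le \delta$, so $(\ell-\Lambda)_+$ is supported on a set of $P$-mass at most $\delta$; on that set $(\ell-\Lambda)_+ \le \ell \le C + K\|f_*(x)\|_2$, and Cauchy--Schwarz together with the previous step gives $\mathbb{E}_P[(\ell-\Lambda)_+] \le C\delta + K\,\mathbb{E}_P[\|f_*(x)\|_2] \le C\delta + K\sqrt{A}\,\mathcal{D}_P(P\|Q)^{1/2}$, which supplies the $A^{1/2}\mathcal{D}_P(P\|Q)^{1/2}$ term. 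For the truncated part, $\ell \wedge \Lambda \in [0,\Lambda]$, so a standard bounded-difference (McDiarmid/Hoeffding) argument for a $[0,\Lambda]$-valued loss yields, with probability at least $1-\delta$ over the sample, $\mathbb{E}_P[\ell\wedge\Lambda] \le \tfrac{1}{N}\sum_i \big(\ell(f_*(x_i),y_i)\wedge\Lambda\big) + \Lambda\sqrt{2\ln(2/\delta)/N} \le \hat{\mathcal{R}}(f_*) + \Lambda\sqrt{2\ln(2/\delta)/N}$. Union-bounding the Markov event and this concentration event gives probability at least $(1-\delta)^2$; finally, in the small-sample regime the factor $\sqrt{2\ln(2/\delta)/N}$ is not small (at least of constant order), so the residual $C\delta + K\sqrt{A}\,\mathcal{D}_P(P\|Q)^{1/2}$ and the constant $C$ can be folded into the leading $\sqrt{2\ln(2/\delta)/N}$ factor, producing exactly the stated bound with $K$ the Lipschitz constant of $\ell$ and $C=\sup_y\ell(0,y)$.

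I expect the main obstacle to be the unboundedness of the composite loss: one is forced to truncate at $\Lambda$, and the cost of truncation must then be re-absorbed by the same quantities $A$, $B$ and $\mathcal{D}_P(P\|Q)$ that appear in the bound — which succeeds only because the Cauchy--Schwarz change of measure transfers the $Q$-moment constraints to $P$-moment bounds at the explicit price $\mathcal{D}_P(P\|Q)$, finite precisely when $Q$ is not too far from $P$. A secondary and genuinely delicate point is that $f_*$ depends on the training sample through $\hat{\mathcal{R}}$, so the bounded-loss concentration step really requires either uniform convergence over the feasible set $\{f : \|f\|_{2,Q}^2 \le A,\ \operatorname{var}_{z\sim Q}(\|f(z)\|_2^2)\le B^2\}$ — whose Rademacher complexity, via Lipschitz contraction, would have to be controlled and absorbed into the constant $C$ — or an argument exploiting that the constraints themselves are data-independent; I would flag this as the step needing the most care to make fully rigorous.
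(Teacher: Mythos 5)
Your proposal is correct in its essential ingredients and reaches a bound of the same shape, but it handles the unboundedness of the composite loss by a genuinely different device than the paper. Both arguments share the two key steps: (i) Lipschitz continuity of $\ell$ plus boundedness of $\mathcal{Y}$ reduce everything to controlling $\|f_*(x)\|_2$, and (ii) the Cauchy--Schwarz change of measure from $Q$ to $P$, which converts the constraints in~\eqref{eq:Conditions} into $\mathbb{E}_P[\|f_*(x)\|_2]\le A^{1/2}\mathcal{D}_P(P\|Q)^{1/2}$ and $\mathbb{E}_P[\|f_*(x)\|_2^2]\le (A+B)\,\mathcal{D}_P(P\|Q)^{1/2}$ --- identical to the paper's computation, including the bound $(\mathbb{E}_Q[\|f\|_2^4])^{1/2}\le(A^2+B^2)^{1/2}\le A+B$. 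Where you differ is in converting these moment bounds into concentration: the paper applies Chebyshev's inequality to claim $\|f(x)\|_2\le \sigma_{f,P}/\sqrt{\delta}+\mathbb{E}_P[\|f(\nu)\|_2]$ \emph{with probability $1-\delta$}, then feeds this as if it were a deterministic sup-norm bound $L$ into Hoeffding and multiplies the confidence levels to obtain $(1-\delta)^2$; you instead truncate the loss at $\Lambda$, use Markov to control the tail mass, and apply Hoeffding only to the genuinely bounded truncated loss. Your route is more careful precisely where the paper is loose --- Chebyshev controls a random draw $x\sim P$, not ``$\forall x\in\mathcal{X}$'' as the paper writes --- so the truncation argument makes rigorous an idea the paper only gestures at. The cost is that your residual terms $C\delta + K A^{1/2}\mathcal{D}_P(P\|Q)^{1/2}$ enter additively rather than multiplied by $\sqrt{2\ln(2/\delta)/N}$, and the final ``folding'' step is only valid when that factor is of order at least $1$; so you prove an inequality of the same form but not literally the displayed one, whereas the paper obtains the displayed form at the price of the unjustified Chebyshev-to-sup-norm substitution. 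Your closing caveat --- that $f_*$ depends on the training sample, so a single application of Hoeffding requires either uniform convergence over the feasible set or an additional argument --- is well taken; the paper's proof has exactly the same gap and does not address it.
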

The proof can be found in the supplementary material, Appendix~\ref{sec:AppendixProofGeneralizationBound}.

\subsection{Practical optimization}
In practice, we try to get close to the ideal conditions of~\thref{th:generalization}. The Lipschitz continuity of the loss and the boundedness of $\mathcal{Y}$ hold in most of the common situations. Therefore, three conditions require attention:
(i) the norm $\|f_*\|_{2,Q}$;
(ii) the standard deviation of $\|f(z)\|_2^2$ for $z \sim Q$;
(iii) the relation between the sampling distribution and the marginal distribution.
Even if we can generate samples from the distribution $Q$, at each step of training, only a batch of limited size can be presented to the network. Nevertheless, controlling the sample mean of a different batch at each iteration can be sufficient to attract all the observed realizations of the output towards 0, and therefore simultaneously bound both the expected value and the standard deviation. 

\begin{proposition} \thlabel{th:boundSampleMean}
If for a fixed $m$, for all samples $\{z_i\sim Q\}$ of size $m$:
\begin{equation}\label{eq:boundSampleMean}
\frac{1}{m} \sum \|f(z_i)\|_2^2 \le A, 
\end{equation}
then $\|f\|_{2,Q}^2$ and $var_{z\sim Q}(\|f(z)\|_2^2)$ are also bounded.
\end{proposition}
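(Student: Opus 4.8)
The plan is to recognize that the hypothesis of \thref{th:boundSampleMean} is much stronger than a statement about one random minibatch: requiring $\tfrac1m\sum_{i=1}^m\|f(z_i)\|_2^2\le A$ for \emph{every} size-$m$ sample is equivalent to an $L_\infty$-type bound on $g(z):=\|f(z)\|_2^2\ge 0$ under $Q$, namely $\operatorname*{ess\,sup}_{z\sim Q} g(z)\le A$, since $\sup\,\tfrac1m\sum_i g(z_i)$ over $m$-tuples in $(\operatorname{supp}Q)^m$ is attained by concentrating all coordinates near the largest value of $g$. Concretely, the first step I would take is to substitute the diagonal tuple $z_1=\dots=z_m=z$ into \eqref{eq:boundSampleMean} for an arbitrary $z\in\operatorname{supp}Q$, which immediately gives $g(z)\le A$; since the network has continuous activations $g$ is continuous, and with $\mathcal{X}$ (hence $\operatorname{supp}Q$) compact this is a genuine pointwise bound $0\le g\le A$ on $\operatorname{supp}Q$, in particular $Q$-almost everywhere.

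Given this pointwise bound, the two conclusions are one line each. For the norm, $\|f\|_{2,Q}^2=\mathbb{E}_{z\sim Q}[g(z)]\le A$, so the first constraint of \eqref{eq:Conditions} holds with the same constant $A$. For the variance, $g^2\le A\,g$ $Q$-a.s., whence $\operatorname{var}_{z\sim Q}\!\big(g(z)\big)\le\mathbb{E}_{z\sim Q}[g(z)^2]\le A\,\mathbb{E}_{z\sim Q}[g(z)]\le A^2$, so the second constraint holds with $B=A$. This exhibits the mechanism advertised in the surrounding discussion: a stochastic optimizer that drives $\tfrac1m\sum_i\|f(z_i)\|_2^2$ below $A$ on each freshly sampled minibatch thereby enforces simultaneously both the $L_{2,Q}$-norm bound and the variance bound required by \thref{th:generalization}.

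The only delicate point, and the one I would be most careful about in writing the argument, is the exact reading of ``for all samples'' and the legitimacy of evaluating on the degenerate diagonal tuple. If the quantifier is read as ranging over all $m$-tuples in $(\operatorname{supp}Q)^m$, the diagonal lies in that set and the reduction is immediate; if it is read only over ``typical'' i.i.d.\ draws (so that the diagonal has probability zero when $Q$ is atomless), one instead argues by contradiction: were $g>A$ on a set $S$ with $Q(S)>0$, then an $m$-sample landing entirely in $S$---an event of probability $Q(S)^m>0$---would violate \eqref{eq:boundSampleMean}. Either route yields $g\le A$ $Q$-a.e., after which nothing beyond elementary integration remains.
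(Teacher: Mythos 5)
Your proof is correct and takes essentially the same route as the paper's: both arguments reduce the hypothesis to a pointwise ($Q$-a.e.) bound on $\|f(z)\|_2^2$ and then bound the mean and the variance via $\operatorname{var}(\|f(z)\|_2^2)\le\mathbb{E}_{z\sim Q}[\|f(z)\|_2^4]$. The only difference is in the constant: the paper extracts only $\|f(z)\|_2^2\le mA$ from the nonnegativity of the summands within a single sample, giving bounds $mA$ and $m^2A^2$, whereas your diagonal-tuple (or positive-probability) argument yields the sharper $\|f(z)\|_2^2\le A$ and hence the bounds $A$ and $A^2$.
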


\begin{proof}
If for any sample $\{z_i\sim Q\}$ of size $m$, the condition~\eqref{eq:boundSampleMean} holds, then: 
\begin{equation}
\forall z_i\sim Q, \|f(z_i)\|_2^2 \le mA 
\end{equation}
and
\begin{equation}
\mathbb{E}_{z\sim Q} [\|f(z)\|_2^2] \le mA ; \operatorname{var}_{z\sim Q}(\|f(z)\|_2^2) \le \mathbb{E}_{z\sim Q} [\|f(z)\|_2^4] \le m^2A^2 .
\end{equation}
\end{proof}
While training, in order to satisfy the two first conditions, we use small batches to estimate the function norm with the expression~\eqref{eq:sampleMean}. When possible, a new sample is generated at each iteration in order to approach the condition in \thref{th:boundSampleMean}. Concerning the condition on the relation between the two distributions, three possibilities where considered in our experiments: (i) using unlabeled data that are not used for training, (ii) generating from a Gaussian distribution that have mean and variance related to training data statistics, and (iii) optimizing a generative model,  e. g. a variational autoencoder \citep{kingma2013auto} on the training set. In the first case, the sampling is done with respect to the data marginal distribution, in which case the derived generalization bound is the tightest. However, in this case, we can use only a limited number of samples, and our control on the function norm can be loose because of the estimation error. In the second and third case, it is possible to generate as many samples as needed to estimate the norm. The Gaussian distribution satisfy the boundedness of $D_P(P\|Q))$, but does not take into account the spatial data structure.  The variational autoencoder, in the contrary, captures the spatial properties of the data, but suffers from mode collapse. In order to alleviate the effect of having a tighter distribution than the data, we use an enlarged Gaussian distribution in the latent space when generating the samples from the trained autoencoder.

\section{Experiments and results} \label{sec:experiments}
To test the proposed regularizer, we consider three different settings:
\begin{enumerate*}[label=(\roman*)]
\item A classification task with kernelized logistic regression,
for which control of the weighted $L_2$ norm theoretically controls the RKHS norm, and should therefore result in accuracy similar to that achieved by standard RKHS regularization;
%where the hypothesis set is convex, and our derived stability and generalization bound holds without any additional assumptions on a network topology or degree of regularization;
\item A classification task with DNNs;
% \item A medical image segmentation task with DNNs.
\item A semantic image segmentation task with DNNs.
\end{enumerate*}

\subsection{Oxford Flowers classification with kernelized logistic regression}

\begin{figure}
\centering
\includegraphics[width=0.3\columnwidth]{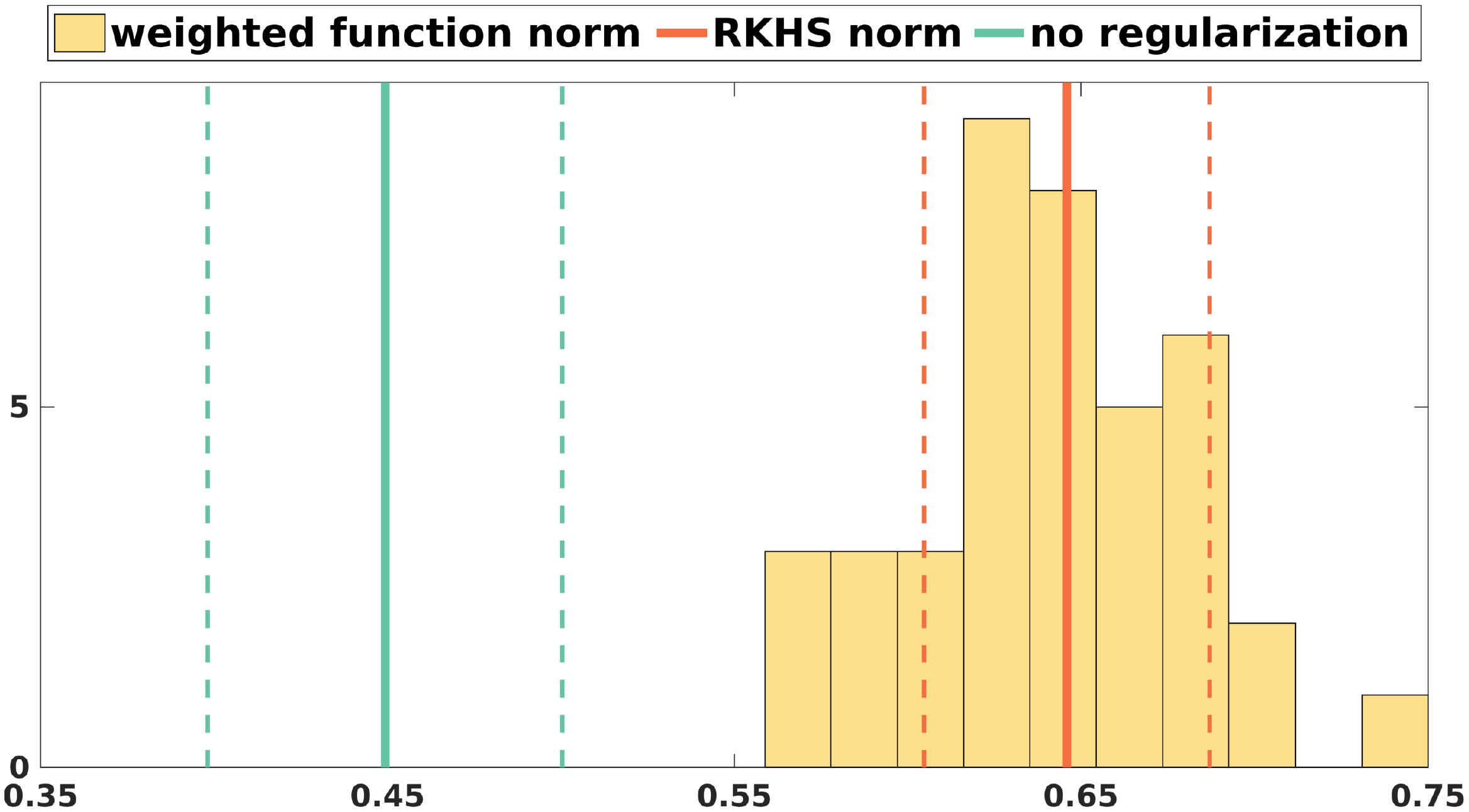}
\caption{Histogram of accuracies with weighted function norm on the Oxford Flowers dataset over 10 trials with 4 different regularization sample sizes, compared to the mean and standard deviation of RKHS norm performance, and the mean and standard deviation of the accuracy obtained without regularization. }
\label{fig:RKHSvsOurs}
\end{figure}

In Sec.~\ref{sec:NormBasedReg}, we state that according to~\cite{steinwart2008support,mendelson2010regularization}, the $L_2$-norm regularization should result in a control over the RKHS norm.  The following experiment shows that both norms have similar behavior on the test data.

\paragraph{Data and Kernel}
For this experiment we consider the 17 classes Oxford Flower Dataset, composed of 80 images per class, and precomputed kernels that have been shown to give good performance on a classification task~\cite{Nilsback06,Nilsback08}.  We have taken the mean of Gaussian kernels as described in~\cite{GehlerN09}.

\paragraph{Settings}
To test the effect of the regularization, we train the logistic regression on a subset of 10\% of the data, and test on 20\% of the samples. The remaining 70\% are used as potential samples for regularization. For both regularizers, the regularization parameter is selected by a 3-fold cross validation. For the weighted norm regularization, we used a 4 different sample sizes ranging from 20\% to 70\% of the data as this results in a favorable balance between controlling the terms in Eq.~\eqref{eq:Conditions} (cf.\ \thref{th:boundSampleMean}). This procedure is repeated on 10 different splits of the data for a better estimate. The optimization is performed by quasi-Newton gradient descent, which is guaranteed to converge due to the convexity of the objective.  
\paragraph{Results}
Figure~\ref{fig:RKHSvsOurs} shows the means and standard deviations of the accuracy on the test set obtained without regularization, and with regularization using the RKHS norm, along with the histogram of accuracies obtained with the weighted norm regularization with the different sample sizes and across the ten trials. This figure demonstrates the equivalent effect of both regularizer, as expected with the stability properties induced by both norms. 

The use of the weighted function norm is more useful for DNNs, where very few other direct function complexity control is known to be polynomial. The next two experiments show the efficiency of our regularizer when compared to other regularization strategies: Weight decay~\cite{moody1995simple}, dropout~\cite{hinton2012improving} and batch normalization~\cite{ioffe2015batch}. 

\subsection{MNIST classification}
\paragraph{Data and Model } In order to test the performance of the tested regularization strategies, we consider only small subsets of 100 samples of the MNIST dataset for training. The tests are conducted on 10,000 samples. We consider the LeNet architecture~\cite{lecun1995comparison}, with various combinations of weight decay, dropout, batch normalization, and  weighted function norms (Figure~\ref{fig:MNIST}).

\paragraph{Settings } We train the model on 10 different random subsets of 100 samples. For the norm estimation, we consider both generating from Gaussian distributions and from a VAE trained for each of the subsets. The VAEs used for this experiment are composed of 2 hidden layers  as in~\cite{kingma2013auto}. 
More details about the training and sampling are given in the supplementary material. For each batch, a new sample is generated for the function norm estimation. SGD is performed using ADAM~\cite{kingma2014adam} for the training of the VAE and plain SGD with momentum is used for the main model. The obtained models are applied to the test set, and classification error curves are averaged over the 10 trials. The regularization parameter is set to 0.01 for all experiments.

\begin{figure}[ht]
\centering
\begin{subfigure}{0.48\textwidth}
\centering
\includegraphics[width=\columnwidth]{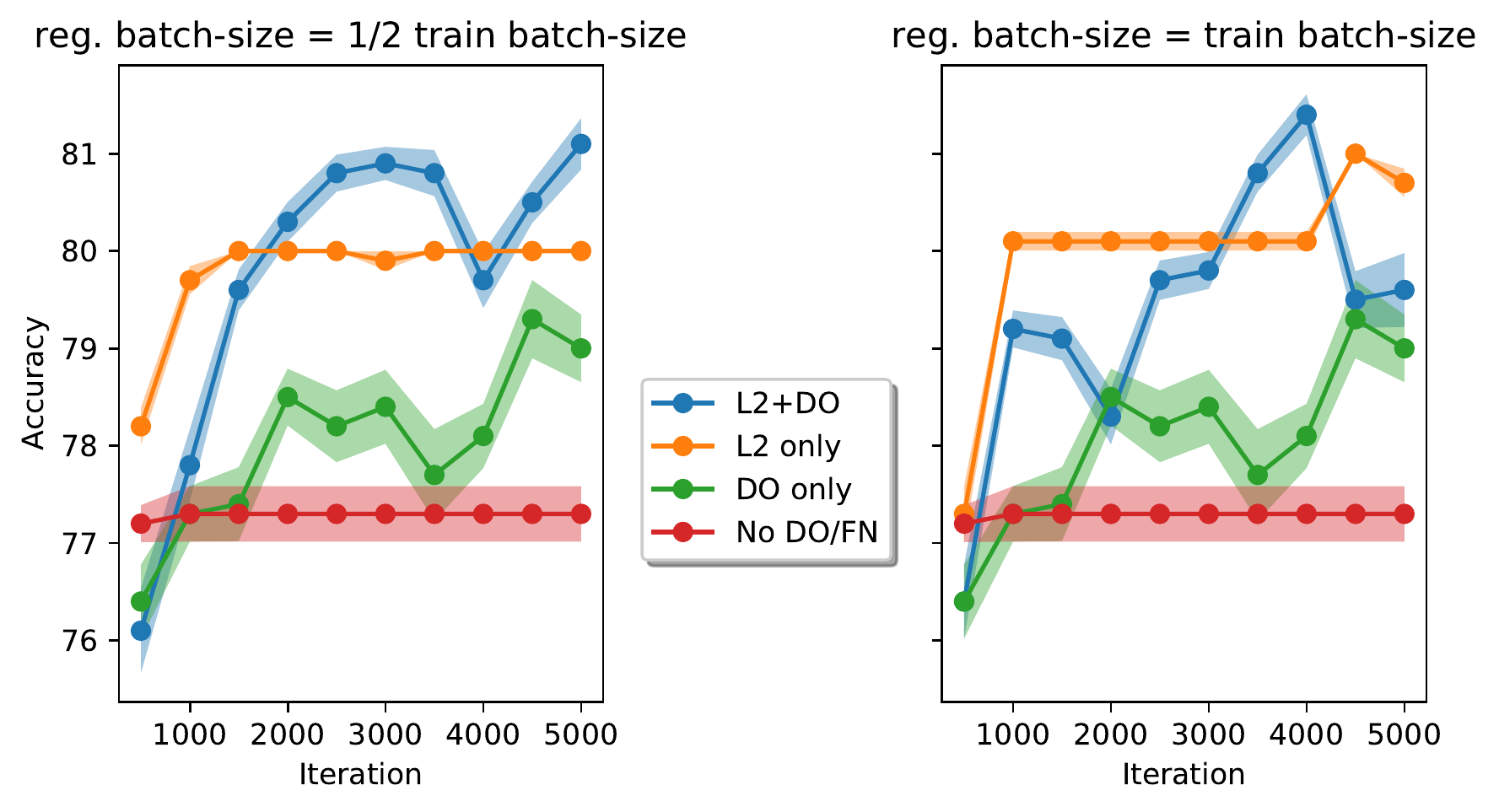}
\caption{Compare to Dropout}\label{fig:MNIST_vae_do}
\end{subfigure} \hfill \begin{subfigure}{0.48\textwidth}
\centering
\includegraphics[width=\columnwidth]{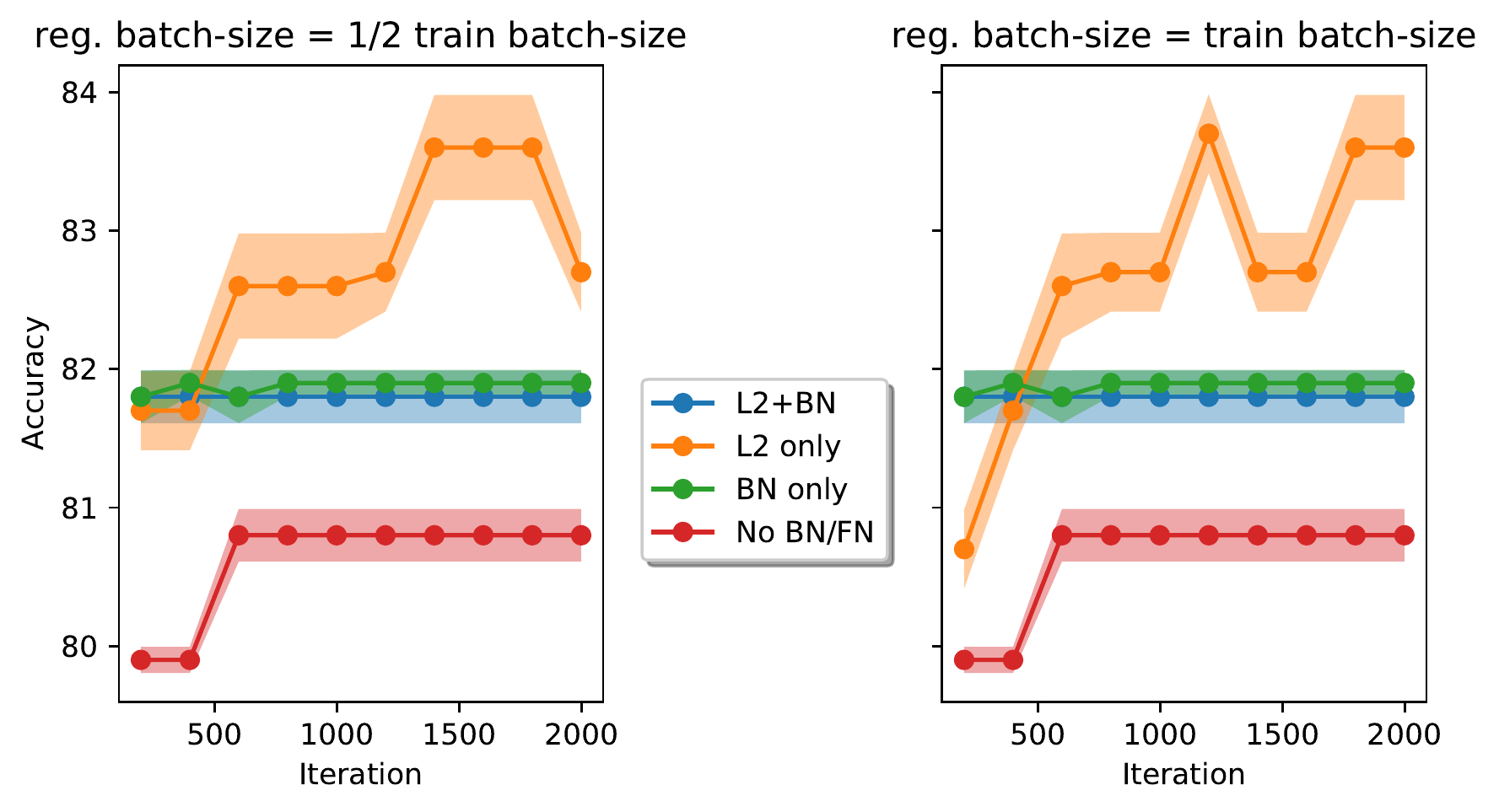}
\caption{Compare to batch-normalization}
\label{fig:MNIST_vae_bn}
\end{subfigure} 
\caption{Performance of $L_2$ norm using VAE for generation, compared to batch-normalization and dropout. All the models use weight decay. The size of the regularization batch is half of the training batch in the left and equal to the training batch in the right of each of the subfigures.}
\label{fig:MNIST}
\end{figure}

% \begin{figure}
% \centering
% \includegraphics[width=0.7\textwidth]{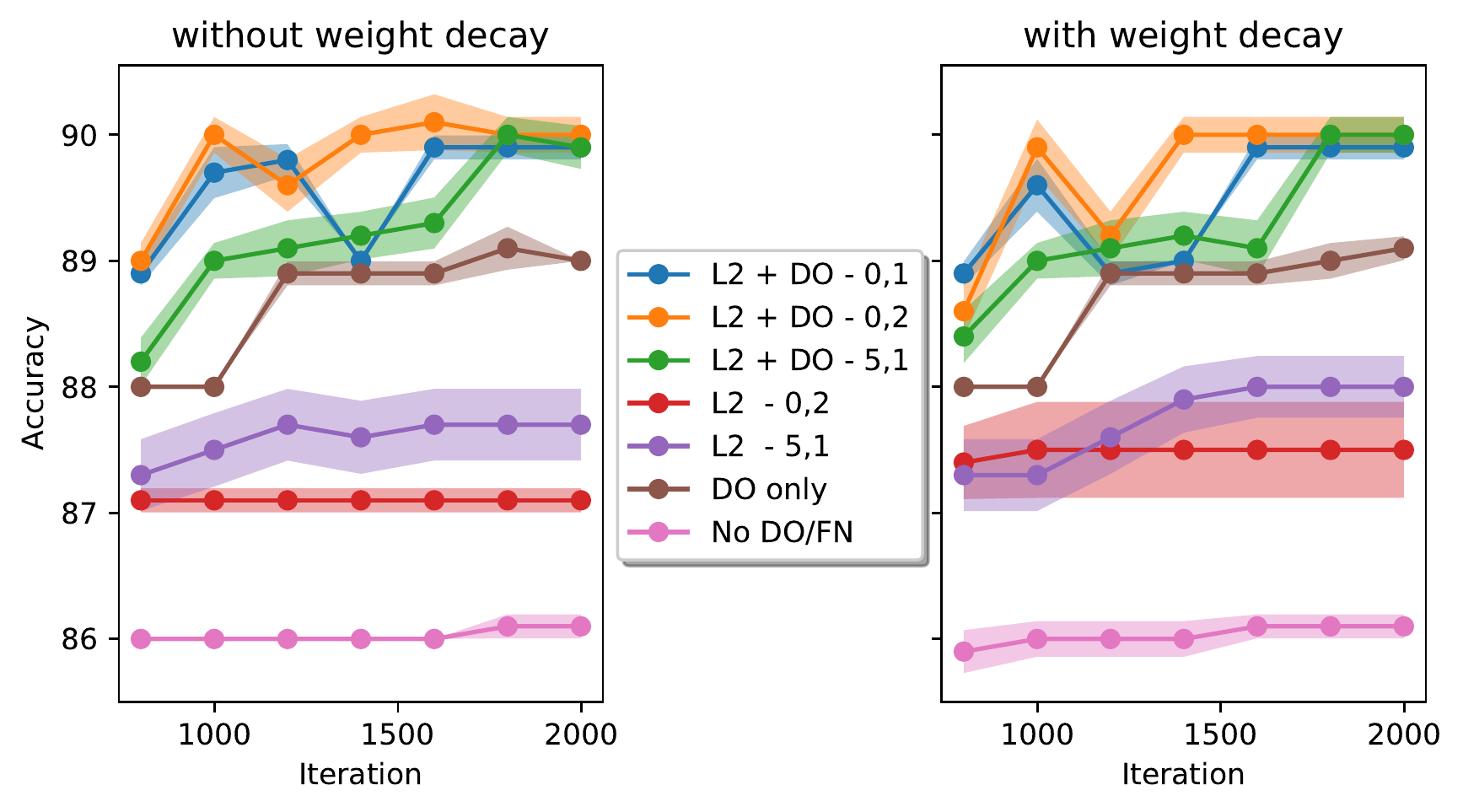}
% \caption{Gaussians with different mean and variance}
% \end{figure}

% \begin{figure}
% \centering
% \includegraphics[width=0.4\textwidth]{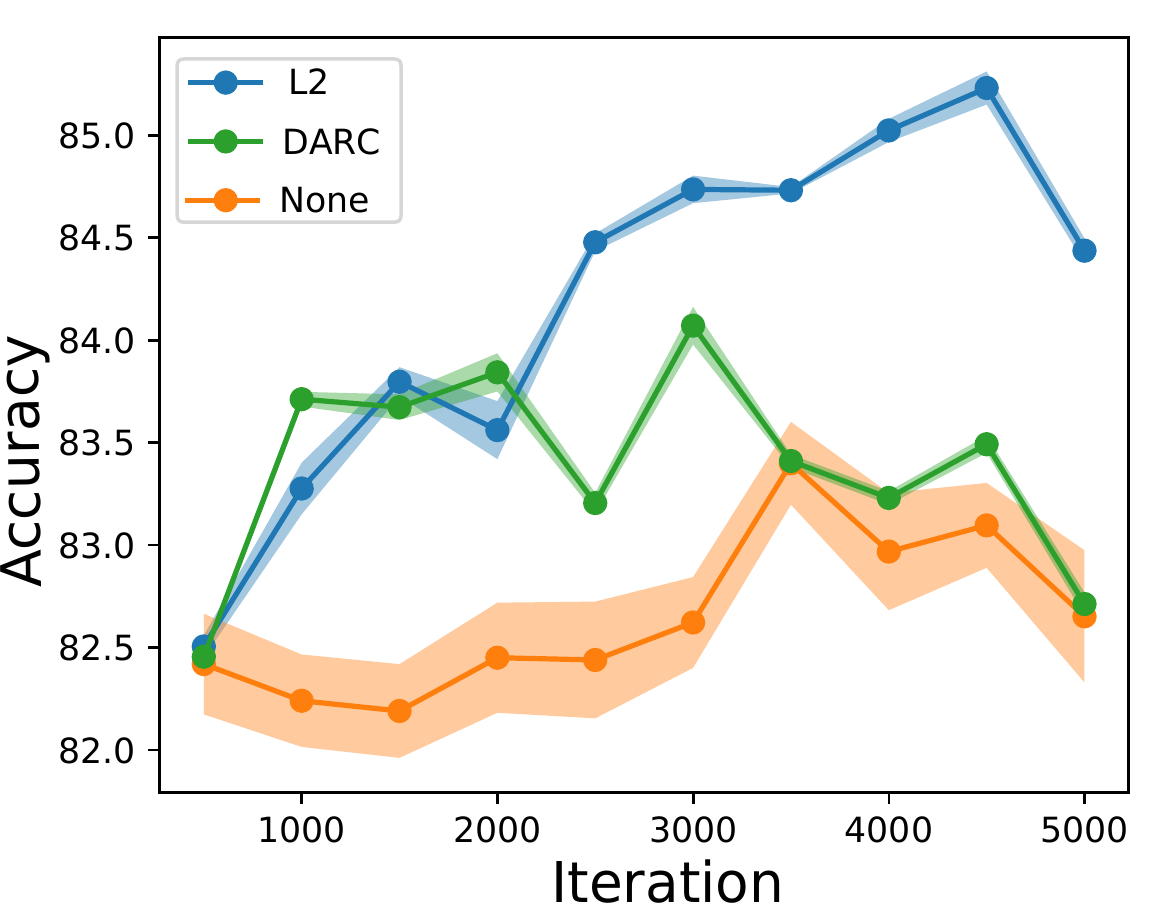}
% \caption{Comparison to~\cite{kawaguchi2017generalization}}
% \end{figure}

\begin{figure}
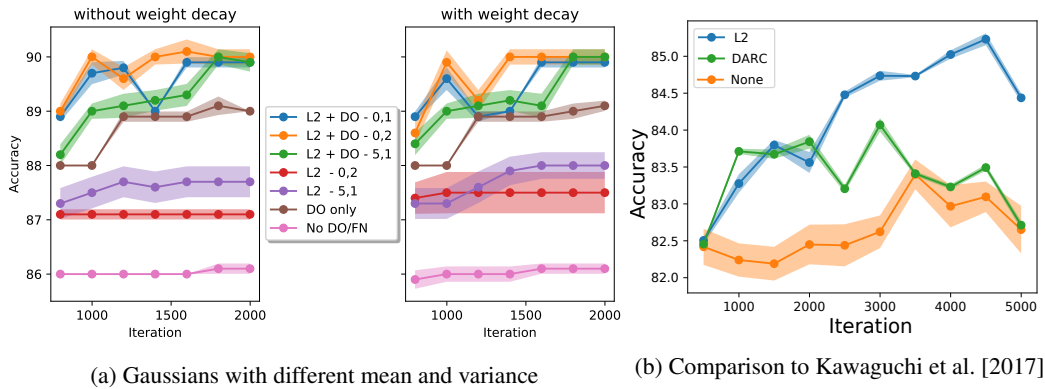

\begin{subfigure}{0.6\linewidth}
\centering
\includegraphics[width=\textwidth]{compare_dropout_gauss}
\caption{Gaussians with different mean and variance\label{fig:MNIST_gauss}}
\end{subfigure}%
\begin{subfigure}{0.4\linewidth}
\centering
\includegraphics[width=\textwidth]{compare_l2_DARC}
\caption{Comparison to~\cite{kawaguchi2017generalization}\label{fig:MNIST_kawaguchi}}
\end{subfigure}
\caption{Several experiments with MNIST using Dropout, weight decay and different regularizers. In the left, a function norm regularization with samples generated from Gaussian distribution is used. The mean and variance are indicated in the legend of the figure. In the right, we compare function norm regularization with VAE to the regularizer introduced in~\cite{kawaguchi2017generalization}.\label{fig:MNIST_gauss_kawaguchi}}
\end{figure}

\paragraph{Results}
Figure~\ref{fig:MNIST} displays the averaged curves and error bars for two different architectures for MNIST. Figure~\ref{fig:MNIST_vae_do} compares the effect of the function norm to dropout and weight decay. Figure~\ref{fig:MNIST_vae_bn} compares the effect of the function norm to dropout and weight decay. Two different sizes of regularization batches are used, in order to test the effect of this parameter. It appears that a higher batch size can reach higher performances but seems to have a higher variance, while the smaller batch size shows more stability with comparable performance at convergence. These experiments show a better performance of our regularization when compared with dropout and batch normalization. Combining our regularization with dropout seems to increase the performance even more, but batch-normalization seems to annihilate the effect of the $L_2$ norm. 

Figure~\ref{fig:MNIST_gauss_kawaguchi} displays the averaged curves and error bars for various experiments using dropout. Figure~\ref{fig:MNIST_gauss} shows the results using Gaussian distributions for generation  of the regularization samples. Using Gaussians with mean 5 and  variance 2, and mean 10 and variance 1 caused the training to diverge and yielded only random performance. Figure~\ref{fig:MNIST_kawaguchi} shows that our method outperforms the regularizer proposed in~\cite{kawaguchi2017generalization}. 

Note that each of the experiments use a different set of randomly generated subsets for training. However, the curves in each individual figure use the same data.

In the next experiment, we show that weighted function norm regularization can improve performance over batch normalization on a real-world image segmentation task.

\subsection{Regularized training of ENet}
We consider the training of ENet~\cite{paszke2016enet}, a network architecture designed for fast image segmentation, on the Cityscapes dataset~\cite{cordts2016cityscapes}. 
As regularization plays a more significant role in the low-data regime, we consider a fixed random subset of $N=500$ images of the training set of Cityscapes as an alternative to the full $2975$ training images. 
We compare train ENet similarly to the author's original optimization settings, in a two-stage training of the encoder and the encoder + decoder part of the architecture, using weighted cross-entropy loss. 
We use Adam a base learning rate of $2.5\cdot 10^{-4}$ with a polynomially decaying learning rate schedule and $90000$ batches of size $10$ for both training stages. 
We found the validation performance of the model trained under these settings with all images to be $60.77\%$ mean IoU; this performance is reduced to $47.15\%$ when training only on the subset. 
% As generative models for such high-resolution inputs is largely an open question, we use unlabeled samples on the  instead of generated samples. 

We use our proposed weighted function norm regularization using unlabeled samples taken from the $20 000$ images of the ``coarse'' training set of Cityscapes, disjoint from the training set. 
Figure~\ref{fig:enet_curve} shows the evolution of the validation accuracy during training. We see that the added regularization leads to a higher performance on the validation set. 
%The resulting segmentation masks are lightly affected by the regularization; some surfaces appear smoother, as the pavement in the example of 
Figure~\ref{fig:enet_outputs} shows a segmentation output with higher performance after adding the regularization. 
\begin{figure}
\centering
\includegraphics[width=.5\textwidth]{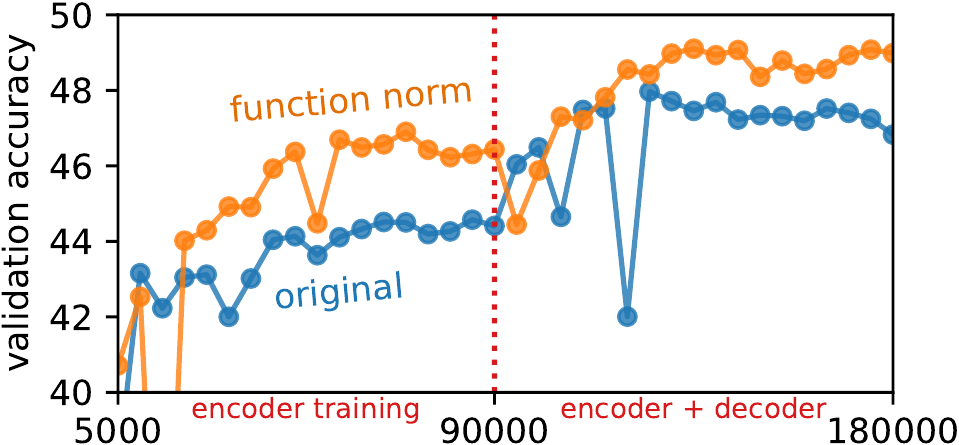}
\caption{Evolution of the validation accuracy of ENet during training with the network's original regularization settings, and with added weighted function norm regularization.~\label{fig:enet_curve}}
\end{figure}
\begin{figure}
\begin{subfigure}{0.33\linewidth}
\includegraphics[width=\textwidth]{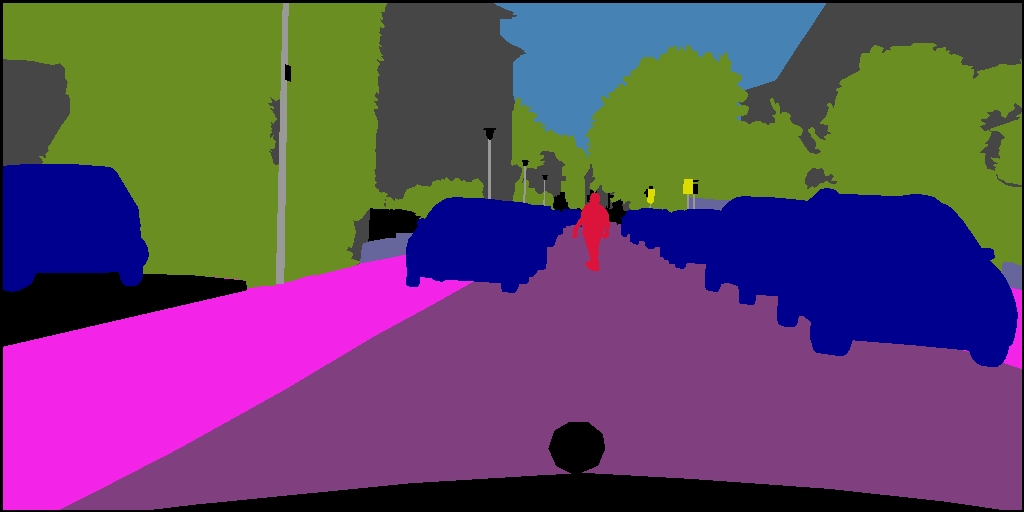}
\caption{ground truth}
\end{subfigure}\hfill\begin{subfigure}{0.33\linewidth}
\includegraphics[width=\textwidth]{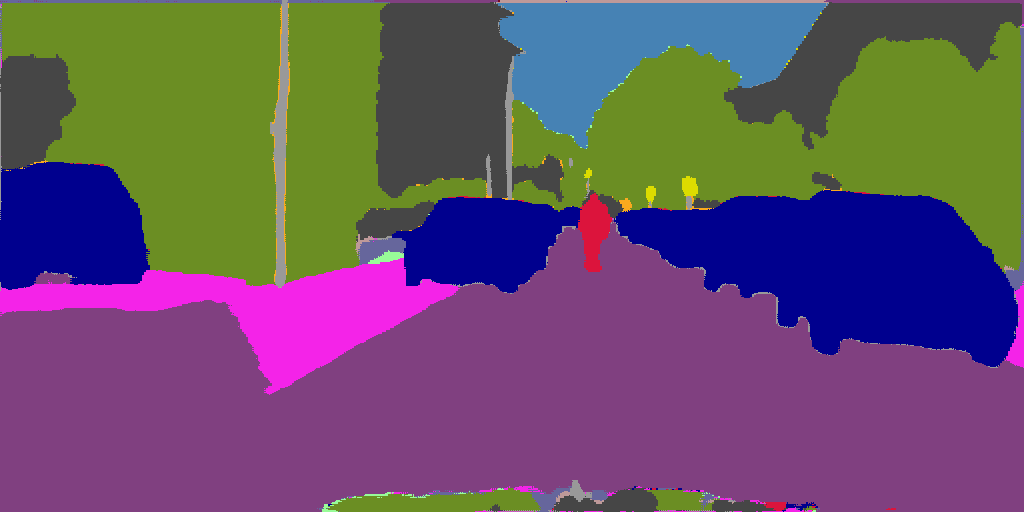}
\caption{weight decay}
\end{subfigure}\hfill\begin{subfigure}{0.33\linewidth}
\includegraphics[width=\textwidth]{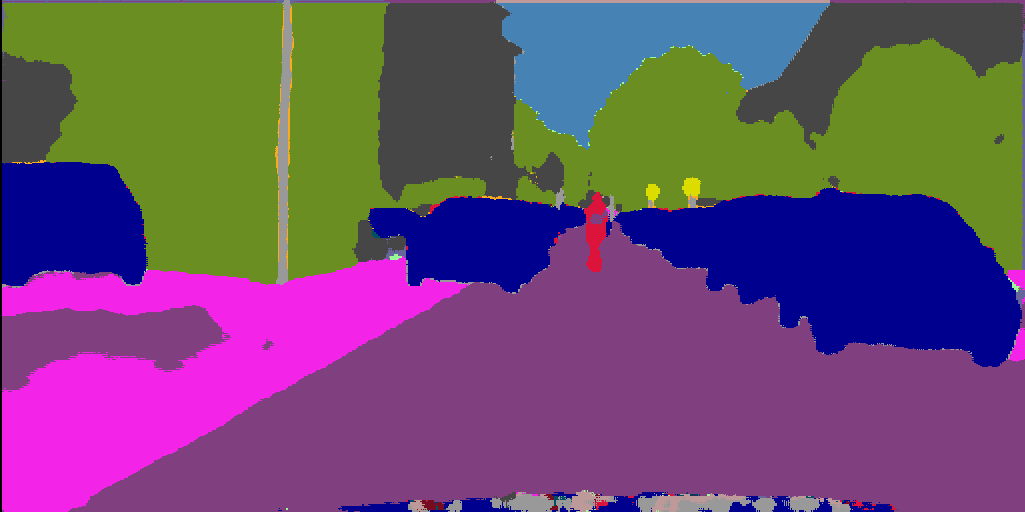}
\caption{weight decay + function norm}
\end{subfigure}
\caption{ENet outputs, after training on 500 samples of Cityscapes, without (b) and with (c) weighted function norm regularization (standard Cityscape color palette -- black regions are \emph{unlabelled} and not discounted in the evaluation.~\label{fig:enet_outputs}}
\end{figure}

In our experiments, we were not able to observe an improvement over the baseline in the same setting with a state-of-the-art semi-supervised method, mean-teacher \citep{tarvainen2017weight}. We therefore believe the observed effect to be attributed to the effect of the regularization. The impact of semi-supervision in the regime of such high resolution images for segmentation is however largely unknown and it is possible that a more thorough exploration of unsupervised methods would lead to a better usage of the unlabeled data.

\section{Discussion and Conclusions}

Regularization in deep neural networks has been challenging, and the most commonly applied frameworks only indirectly penalize meaningful measures of function complexity.  It appears that the better understanding of regularization and generalization in more classically considered function classes, such as linear functions and RKHSs, is due to the well behaved and convex nature of the function class and regularizers.  By contrast DNNs define poorly understood non-convex function sets.  Existing regularization strategies have not been shown to penalize a norm of the function.  We have shown here for the first time that norm computation in a low fixed depth neural network is NP-hard, elucidating some of the challenges of working with DNN function classes.  This negative result motivates the use of stochastic approximations to weighted norm computation, which is readily compatible with stochastic gradient descent optimization strategies.  We have developed gene backpropagation algorithms for weighted $L_2$ norms, and have demonstrated consistent improvement in performance over the most popular regularization strategies.
We  empirically validated the expected effect of the employed regularizer on generalization with experiments on the Oxford Flowers dataset, the MNIST image classification problem, and the training of ENet on Cityscapes. %ISBR brain segmentation task. 
We will make source code available at the time of publication.

%As a conclusion, the proposed regularizer is not only able to considerably improve the performance of neural networks in the small sample regime, but also has a behavior that is well explained by classical learning theory.

%\iffalse
\section*{Acknowledgments}
This work is funded by Internal Funds KU Leuven, FP7-MC-CIG 334380, the Research Foundation - Flanders (FWO) through project
number G0A2716N, and an Amazon Research Award.
%\fi

\vskip 0.4in
\bibliographystyle{plainnat}
\bibliography{biblio}

\begin{thebibliography}{38}
\providecommand{\natexlab}[1]{#1}
\providecommand{\url}[1]{\texttt{#1}}
\expandafter\ifx\csname urlstyle\endcsname\relax
  \providecommand{\doi}[1]{doi: #1}\else
  \providecommand{\doi}{doi: \begingroup \urlstyle{rm}\Url}\fi

\bibitem[Baldi and Sadowski(2013)]{NIPS2013_4878}
Pierre Baldi and Peter~J Sadowski.
\newblock Understanding dropout.
\newblock In C.~J.~C. Burges, L.~Bottou, M.~Welling, Z.~Ghahramani, and K.~Q.
  Weinberger, editors, \emph{Advances in Neural Information Processing Systems
  26}, pages 2814--2822. 2013.

\bibitem[Bousquet and Elisseeff(2002)]{bousquet2002stability}
Olivier Bousquet and {Andr{\'e}} Elisseeff.
\newblock Stability and generalization.
\newblock \emph{Journal of Machine Learning Research}, 2\penalty0
  (Mar):\penalty0 499--526, 2002.

\bibitem[Cook(1971)]{Cook1971CTP}
Stephen~A. Cook.
\newblock The complexity of theorem-proving procedures.
\newblock In \emph{Proceedings of the Third Annual ACM Symposium on Theory of
  Computing}, pages 151--158, 1971.

\bibitem[Cordts et~al.(2016)Cordts, Omran, Ramos, Rehfeld, Enzweiler, Benenson,
  Franke, Roth, and Schiele]{cordts2016cityscapes}
Marius Cordts, Mohamed Omran, Sebastian Ramos, Timo Rehfeld, Markus Enzweiler,
  Rodrigo Benenson, Uwe Franke, Stefan Roth, and Bernt Schiele.
\newblock The {Cityscapes} dataset for semantic urban scene understanding.
\newblock In \emph{Proceedings of the IEEE conference on computer vision and
  pattern recognition}, pages 3213--3223, 2016.

\bibitem[Dinh et~al.(2017)Dinh, Pascanu, Bengio, and Bengio]{dinh2017sharp}
Laurent Dinh, Razvan Pascanu, Samy Bengio, and Yoshua Bengio.
\newblock Sharp minima can generalize for deep nets.
\newblock In \emph{International Conference on Machine Learning}, 2017.

\bibitem[Gehler and Nowozin(2009)]{GehlerN09}
Peter~V. Gehler and Sebastian Nowozin.
\newblock On feature combination for multiclass object classification.
\newblock In \emph{International Conference on Computer Vision}, pages
  221--228, 2009.

\bibitem[Girosi and Poggio(1990)]{girosi1990networks}
Federico Girosi and Tomaso Poggio.
\newblock Networks and the best approximation property.
\newblock \emph{Biological cybernetics}, 63\penalty0 (3):\penalty0 169--176,
  1990.

\bibitem[Girosi et~al.(1995)Girosi, Jones, and
  Poggio]{doi:10.1162/neco.1995.7.2.219}
Federico Girosi, Michael Jones, and Tomaso Poggio.
\newblock Regularization theory and neural networks architectures.
\newblock \emph{Neural Computation}, 7\penalty0 (2):\penalty0 219--269, 1995.

\bibitem[Haeffele and Vidal(2017)]{haeffele2017global}
Benjamin~D Haeffele and Ren{\'e} Vidal.
\newblock Global optimality in neural network training.
\newblock In \emph{Proceedings of the IEEE Conference on Computer Vision and
  Pattern Recognition}, pages 7331--7339, 2017.

\bibitem[Hahnloser et~al.(2000)Hahnloser, Sarpeshkar, Mahowald, Douglas, and
  Seung]{hahnloser2000digital}
Richard~HR Hahnloser, Rahul Sarpeshkar, Misha~A Mahowald, Rodney~J Douglas, and
  H~Sebastian Seung.
\newblock Digital selection and analogue amplification coexist in a
  cortex-inspired silicon circuit.
\newblock \emph{Nature}, 405\penalty0 (6789):\penalty0 947, 2000.

\bibitem[Hardt et~al.(2016)Hardt, Recht, and Singer]{hardt2015train}
Moritz Hardt, Benjamin Recht, and Yoram Singer.
\newblock Train faster, generalize better: Stability of stochastic gradient
  descent.
\newblock In \emph{International Conference on Machine Learning}, 2016.

\bibitem[Helmbold and Long(2015)]{JMLR:v16:helmbold15a}
David~P. Helmbold and Philip~M. Long.
\newblock On the inductive bias of dropout.
\newblock \emph{Journal of Machine Learning Research}, 16:\penalty0 3403--3454,
  2015.

\bibitem[Hinton et~al.(2012)Hinton, Srivastava, Krizhevsky, Sutskever, and
  Salakhutdinov]{hinton2012improving}
Geoffrey~E Hinton, Nitish Srivastava, Alex Krizhevsky, Ilya Sutskever, and
  Ruslan~R Salakhutdinov.
\newblock Improving neural networks by preventing co-adaptation of feature
  detectors.
\newblock \emph{arXiv:1207.0580}, 2012.

\bibitem[Hochreiter and Schmidhuber(1997)]{hochreiter1997flat}
Sepp Hochreiter and {J{\"u}rgen} Schmidhuber.
\newblock Flat minima.
\newblock \emph{Neural Computation}, 9\penalty0 (1):\penalty0 1--42, 1997.

\bibitem[Hoeffding(1963)]{hoeffding1963probability}
Wassily Hoeffding.
\newblock Probability inequalities for sums of bounded random variables.
\newblock \emph{Journal of the American statistical association}, 58\penalty0
  (301):\penalty0 13--30, 1963.

\bibitem[Ioffe and Szegedy(2015)]{ioffe2015batch}
Sergey Ioffe and Christian Szegedy.
\newblock Batch normalization: Accelerating deep network training by reducing
  internal covariate shift.
\newblock In \emph{International Conference on Machine Learning}, 2015.

\bibitem[Kawaguchi et~al.(2017)Kawaguchi, Kaelbling, and
  Bengio]{kawaguchi2017generalization}
Kenji Kawaguchi, Leslie~Pack Kaelbling, and Yoshua Bengio.
\newblock Generalization in deep learning.
\newblock \emph{arXiv preprint arXiv:1710.05468}, 2017.

\bibitem[Keskar et~al.(2017)Keskar, Mudigere, Nocedal, Smelyanskiy, and
  Tang]{keskar2016large}
Nitish~Shirish Keskar, Dheevatsa Mudigere, Jorge Nocedal, Mikhail Smelyanskiy,
  and Ping Tak~Peter Tang.
\newblock On large-batch training for deep learning: Generalization gap and
  sharp minima.
\newblock In \emph{International Conference on Learning Representations}, 2017.

\bibitem[Kingma and Ba(2015)]{kingma2014adam}
Diederik Kingma and Jimmy Ba.
\newblock Adam: A method for stochastic optimization.
\newblock In \emph{International Conference on Learning Representations}, 2015.

\bibitem[Kingma and Welling(2014)]{kingma2013auto}
Diederik~P. Kingma and Max Welling.
\newblock Auto-encoding variational {Bayes}.
\newblock In \emph{International Conference on Learning Representations}, 2014.

\bibitem[LeCun et~al.(1995)LeCun, Jackel, Bottou, Brunot, Cortes, Denker,
  Drucker, Guyon, Muller, S\"{a}ckinger, Simard, and
  Vapnik]{lecun1995comparison}
Yann LeCun, L~D Jackel, Leon Bottou, A~Brunot, Corinna Cortes, J~S Denker,
  Harris Drucker, I~Guyon, U~A Muller, Eduard S\"{a}ckinger, P.~Simard, and
  V.~Vapnik.
\newblock Comparison of learning algorithms for handwritten digit recognition.
\newblock In \emph{International Conference on Artificial Neural Networks},
  1995.

\bibitem[Lee(1990)]{lee1990u}
A.~J. Lee.
\newblock \emph{U-Statistics: Theory and Practice}.
\newblock CRC Press, 1990.

\bibitem[Mendelson et~al.(2010)Mendelson, Neeman,
  et~al.]{mendelson2010regularization}
Shahar Mendelson, Joseph Neeman, et~al.
\newblock Regularization in kernel learning.
\newblock \emph{The Annals of Statistics}, 38\penalty0 (1):\penalty0 526--565,
  2010.

\bibitem[Moody et~al.(1995)Moody, Hanson, Krogh, and Hertz]{moody1995simple}
J~Moody, S~Hanson, Anders Krogh, and John~A Hertz.
\newblock A simple weight decay can improve generalization.
\newblock \emph{Advances in Neural Information Processing Systems}, 4:\penalty0
  950--957, 1995.

\bibitem[Murphy(2012)]{Murphy2012MLP}
Kevin~P. Murphy.
\newblock \emph{Machine Learning: A Probabilistic Perspective}.
\newblock MIT Press, 2012.

\bibitem[Neyshabur et~al.(2015)Neyshabur, Salakhutdinov, and
  Srebro]{NIPS2015_5797}
Behnam Neyshabur, Ruslan~R Salakhutdinov, and Nati Srebro.
\newblock {Path-SGD}: Path-normalized optimization in deep neural networks.
\newblock In C.~Cortes, N.~D. Lawrence, D.~D. Lee, M.~Sugiyama, and R.~Garnett,
  editors, \emph{Advances in Neural Information Processing Systems 28}, pages
  2422--2430. 2015.

\bibitem[Nilsback and Zisserman(2006)]{Nilsback06}
M-E. Nilsback and A.~Zisserman.
\newblock A visual vocabulary for flower classification.
\newblock In \emph{Proceedings of the IEEE Conference on Computer Vision and
  Pattern Recognition}, volume~2, pages 1447--1454, 2006.

\bibitem[Nilsback and Zisserman(2008)]{Nilsback08}
M-E. Nilsback and A.~Zisserman.
\newblock Automated flower classification over a large number of classes.
\newblock In \emph{Proceedings of the Indian Conference on Computer Vision,
  Graphics and Image Processing}, 2008.

\bibitem[Paszke et~al.(2016)Paszke, Chaurasia, Kim, and
  Culurciello]{paszke2016enet}
Adam Paszke, Abhishek Chaurasia, Sangpil Kim, and Eugenio Culurciello.
\newblock Enet: A deep neural network architecture for real-time semantic
  segmentation.
\newblock \emph{arXiv preprint arXiv:1606.02147}, 2016.

\bibitem[Riesz(1907)]{riesz1907espece}
F.~Riesz.
\newblock \emph{Sur une espece de geometrie analytique des systemes de
  fonctions sommables}.
\newblock Gauthier-Villars, 1907.

\bibitem[{Sch\"{o}lkopf} and Smola(2001)]{Scholkopf:2001:LKS:559923}
Bernhard {Sch\"{o}lkopf} and Alexander~J. Smola.
\newblock \emph{Learning with Kernels}.
\newblock MIT Press, 2001.

\bibitem[Steinwart and Christmann(2008)]{steinwart2008support}
Ingo Steinwart and Andreas Christmann.
\newblock \emph{Support vector machines}.
\newblock Springer, 2008.

\bibitem[Tarvainen and Valpola(2017)]{tarvainen2017weight}
Antti Tarvainen and Harri Valpola.
\newblock Weight-averaged consistency targets improve semi-supervised deep
  learning results.
\newblock \emph{arXiv preprint arXiv:1703.01780}, 2017.

\bibitem[Tikhonov(1963)]{Tikhonov:1963}
A.~N. Tikhonov.
\newblock Solution of incorrectly formulated problems and the regularization
  method.
\newblock \emph{Soviet Math. Dokl.}, 4:\penalty0 1035--1038, 1963.

\bibitem[Vapnik(1998)]{vapnik1998statistical}
Vladimir Vapnik.
\newblock \emph{Statistical learning theory}.
\newblock Wiley, 1998.

\bibitem[Wahba(1990)]{wahba1990spline}
Grace Wahba.
\newblock \emph{Spline models for observational data}, volume~59.
\newblock Siam, 1990.

\bibitem[Wahba(2000)]{wahba2000splines}
Grace Wahba.
\newblock Splines in nonparametric regression.
\newblock \emph{Encyclopedia of Environmetrics}, 2000.

\bibitem[Zhang et~al.(2017)Zhang, Bengio, Hardt, Recht, and
  Vinyals]{zhang2016understanding}
Chiyuan Zhang, Samy Bengio, Moritz Hardt, Benjamin Recht, and Oriol Vinyals.
\newblock Understanding deep learning requires rethinking generalization.
\newblock In \emph{International Conference on Learning Representations}, 2017.

\end{thebibliography}

\clearpage
\appendix
\noindent{\huge\bfseries Function Norms and Regularization in Deep Neural Networks: Supplementary Material\par}

\vspace{3em}
\emph{In this Supplementary Material, Section~\ref{Suppl:nphard} details our NP-hardness proof of function norm computation for DNN. Section B. gives additional insight in the fact that weight decay does not define a function norm. 
Section C details our proof of our generalization bound for the L2 weighted function norm. Section~D gives some details on the VAE architecture used. Finally, Section~E gives additional results concerning Sobolev function norms.}

\section{NP-hardness of DNN \texorpdfstring{$L_2$}{L2} function norm~\label{Suppl:nphard}} 
We divide the proof of \thref{thm:NormNPhardConstruction} in the two following subsections. In Section~\ref{sec:defnp}, we introduce the necessary functions in order to build our constructive proof. Section~\ref{sec:proofnp} gives the proof of~\ref{thm:NormNPhardConstruction}, while Section~\ref{sec:AppendixOutputORblocks} demonstrate some technical property needed for one of the definitions in~\ref{sec:defnp}.

\subsection{Definitions\label{sec:defnp}}

\begin{definition} \thlabel{def:F0}
For 
a fixed $\varepsilon<0.5$, we define $f_0 : \mathbb{R} \rightarrow [0,1]$ as
\begin{equation}
f_0(x) = \varepsilon^{-1}[\max(0, x+\varepsilon) - 2\max(0,x) + \max(0, x- \varepsilon)],
\end{equation}
and 
\begin{equation}
f_1(x) = f_0(x-1).
\end{equation}
These functions place some non-zero values in the $\varepsilon$ neighborhood of $x = 0$ and $x = 1$, respectively, and zero elsewhere.  Furthermore, $f_0(0) = 1$ and $f_1(1) = 1$ (see Figure~\ref{fig:F0}).
\end{definition}
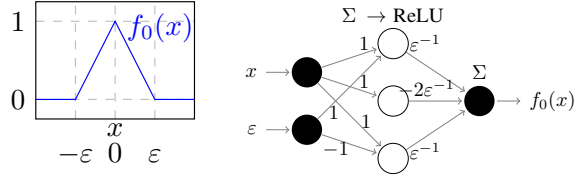
\begin{figure}[hb]
\centering
\resizebox{!}{2.3cm}{
\begin{tikzpicture}
\newcommand\widthplotaxis{3}
\newcommand\distepsilon{1.5}
\begin{axis}[
    clip=false,
    tick style={draw=none},
    width=3.5cm,height=3cm,
    xlabel={$x$},
    x label style={at={(axis description cs:0.5,0.05)},anchor=north},
    xmin=-\widthplotaxis, xmax=\widthplotaxis,
    ymin=-0.25, ymax=1.25,
    xtick={-\distepsilon,0,\distepsilon},
    xticklabels={$-\varepsilon$, $0$, $\varepsilon$},
    xticklabel style={text height=2.5ex},
%     extra x ticks={2.5},
%     extra x tick labels={$x$},
%     % extra x ticklabel style={text height=1.5ex},
%     extra x tick style={grid=none, draw=none,
%       tick label style={text height=1.5ex, yshift=-1mm}},
    ytick={0, 1},
    % legend pos=north east,
    xmajorgrids=true,
    ymajorgrids=true,
    grid style=dashed,
]

\addplot[
    color=blue,
    ]
    coordinates {
    (-\widthplotaxis,0)(-\distepsilon,0)(0,1)(\distepsilon,0)(\widthplotaxis,0)
    };
%    \legend{$f_0(x)$}
\node at (axis cs:1.7,0.6) [anchor=south, color=blue] {$f_0(x)$};
\end{axis}
\end{tikzpicture}
}
\resizebox{!}{2.3cm}{
\def\layersep{1.5cm}
\colorlet{mygreen}{black!50}
\begin{tikzpicture}[shorten >=1pt,->,draw=black!50, node distance=\layersep]
    \tikzstyle{every pin edge}=[<-,shorten <=1pt]
    \tikzstyle{neuron}=[circle,fill=black,minimum size=15pt,inner sep=0pt]
    \tikzstyle{input neuron}=[neuron];
    \tikzstyle{output neuron}=[neuron];
    \tikzstyle{hidden neuron}=[neuron, fill=none, draw=black];
    \tikzstyle{annot} = [text width=8em, text centered]

    % Draw the input layer nodes
    
     \node[input neuron, pin=left:$x$  ] (I-1) at (0,-1) {};
     \node[input neuron, pin=left:$\varepsilon$ ] (I-2) at (0,-2) {};

    % Draw the hidden layer nodes
    \foreach \name / \y in {1,...,3}
        \path[yshift=0.5cm]
            node[hidden neuron] (H-\name) at (\layersep,-\y cm) {};

    % Draw the output layer node
    \node[output neuron,pin={[pin edge={->}]right:$f_0(x)$}, right of=H-2] (O) {};

    % Connect every node in the input layer with every node in the
    % hidden layer.
    \foreach \source in {1,...,1}
        \foreach \dest in {1,...,3}
            \path (I-\source) edge node[near end, above, yshift=-2pt]{$1$} (H-\dest);
    \path (I-2) edge[draw=mygreen] node[near start, below, yshift=2pt]{$1$} (H-1);
%    \path (I-2) edge[draw=red!50] node[below]{\textcolor{red!50}{0}} (H-2);
    \path (I-2) edge[draw=mygreen] node[near start, below, yshift=2pt]{$-1$} (H-3);

    % Connect every node in the hidden layer with the output layer  
        
        \path (H-1) edge node[near start, above, yshift=2pt, xshift=2pt]{$\varepsilon^{-1}$}(O);
        \path (H-2) edge node[near start, above, yshift=-2pt, xshift=2pt]{$-2\varepsilon^{-1}$}(O);
        \path (H-3) edge node[near start, below, yshift=2pt, xshift=2pt]{$\varepsilon^{-1}$}(O);

   % Annotate the layers
    \node[annot,above of=H-1, node distance=.5cm] (hl) {$\Sigma \rightarrow$ ReLU};
    \node[annot,above of=O, node distance=.5cm] {$\Sigma$};
\end{tikzpicture}
}
\caption{Plot of function $f_0$, and network computing this function.}
\label{fig:F0}
\end{figure}

A sentence in $3$-conjunctive normal form (3-CNF) consists of the conjunction of $c$ clauses.  Each clause is a disjunction over $3$ literals, a literal being either a logical variable or its negation.  For the construction of our network, each variable will be identified with a dimension of our input $x \in \mathbb{R}^p$, we denote each of $c$ clauses in the conjunction $b_j$ for $1\leq j \leq c$, and each literal $l_{j_k}$ for $1 \leq k \leq 3$ will be associated with $f_0(x_i)$ if the literal is a negation of the $i$th variable or $f_1(x_i)$ if the literal is not a negation.  Note that each variable can appear in multiple clauses with or without negation, and therefore the indexing of literals is distinct from the indexing of variables.

\begin{definition}
We define the function $f_\wedge : [0,1]^c \rightarrow [0,1]$  as 
%\begin{equation}\label{def:and}
$f_\wedge(z) = f_0\left(\sum_{i=1}^{c} z_i - c\right),$ 
%\end{equation}
%where $f_\wedge(z) = 1$ for $z = \mathbf{1}$, %
with $c$ such that $f_\wedge(\mathbf{1}) = 1$ -- for $\mathbf{1}$ a vector of ones.
\end{definition}

\begin{definition}
We define the function $f_\lor : [0,1]^3 \rightarrow [0, 1]$ as 
\begin{equation}
\sum_{j=1}^3 f_0\left(\sum_{i=1}^3 z_i - j
\right). 
\end{equation}
For a proof that $f_\lor$ has values in $[0,1]$, see \thref{thm:NPhardnessLemmaOutputORblocksin01} in Section~\ref{sec:AppendixOutputORblocks} below.
% \begin{align}\label{def:or}
% f_\lor(z) =& \sum_{j=1}^3 f_0\left(\sum_{i=1}^3 z_i - j
% \right)  .
% \end{align}
\end{definition}

In order to ensure our network defines a function with finite measure, we may use the following function to truncate values outside the unit cube.
\begin{definition}\thlabel{def:SATconstructionTruncationFunction}
$f_{T}(x) = \| x \|_1 \cdot (1+ \operatorname{dim}(x))^{-1}$.
\end{definition}

\subsection{Proof of \texorpdfstring{\thref{thm:NormNPhardConstruction}}{ Lemma}\label{sec:proofnp}}

%This function is constructed such that it has a value greater than one only strictly outside the unit cube, and is straightforward to implement in a shallow network with ReLU activations. For bounded $f$, $\max\left( f(x) - f_{T}(x), 0\right)$ will truncate all values of $f$ outside a region of finite support guaranteeing finite measure.
\begin{proof}[Proof of~\thref{thm:NormNPhardConstruction}]
We construct the three hidden layers of the network as follows.
\begin{enumerate*}[label=(\roman*)]
\item In the first layer, we compute $f_0(x_i)$ for the literals containing negations and $f_1(x_i$) for the literals without negation. These operators introduce one hidden layer of at most $6 p$ nodes.
\item The second layer computes the clauses of three literals 
using the function $f_\lor$. This operator 
introduces one hidden layer with a number of nodes linear in the number of clauses in $\mathcal{B}$. 
\item Finally, each of the outputs of $f_\lor$ are concatenated into a vector and passed to the function $f_\wedge$. This operator requires one application $f_0$ and thus introduces one hidden layer with 
a constant number of nodes. 
\end{enumerate*}

Let $f_{\mathcal{B}}$ be the function coded by this network.  
By optionally adding an additional layer implementing the truncation in \thref{def:SATconstructionTruncationFunction} we can guarantee that the resulting function has finite $L_2$ norm. It remains to show that the norm of $f_{\mathcal{B}}$ is strictly positive if and only if $\mathcal{B}$ is satisfiable.

If $\mathcal{B}$ is satisfiable, let $x\in \{0, 1\}^p$ be a satisfying assignment of $\mathcal{B}$; by construction $f_{\mathcal{B}}(x)$ is $1$, as all the clauses evaluate exactly to 1. $f_{\mathcal{B}}$ being continuous by composition of continuous functions, we conclude that $\|f_{\mathcal{B}} \|_2 > 0$.

Now suppose $\mathcal{B}$ not satisfiable. For a given clause $b_j$, consider the dimensions associated with the variables contained within this clause and label them $x_{j_1}$, $x_{j_2}$, and $x_{j_3}$.  Now, for all $2^3$ possible assignments of the variables, consider the $2^3$ polytopes defined by restricting each $x_{j_k}$ to be greater than or less than $0.5$.  Exactly one of those variable assignments will have $l_{j_1} \lor l_{j_2} \lor l_{j_3} = \operatorname{false}$.  The function value over the corresponding polytope must be zero.  This is because the output of the $j$th $f_{\lor}$ must be zero over this region by construction, and therefore the output of the $f_{\wedge}$ will also be zero as the summation of all the $f_{\lor}$ outputs will be at most $c-1$.  For each of the $2^p$ assignments of the Boolean variables at least one clause will guarantee that $f_{\mathcal{B}}(x) = 0$ for all $x$ in the corresponding polytope, as the sentence is assumed to be unsatisfiable.  The union of all such polytopes is the entire space $\mathbb{R}^{p}$.  As $f_{\mathcal{B}}(x)=0$ everywhere, $\|f_{\mathcal{B}} \|_2 = 0$.
\end{proof}

\begin{corollary}
$\|\max\left( f_{\mathcal{B}} - f_{T}, 0\right) \|_2 > 0 \iff \mathcal{B}$ is satisfiable, and $\max\left( f_{\mathcal{B}} - f_{T}, 0\right)$ has finite measure for all $\mathcal{B}$.
\end{corollary}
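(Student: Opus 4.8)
The plan is to read the corollary off \thref{thm:NormNPhardConstruction} together with two elementary facts about the truncation function of \thref{def:SATconstructionTruncationFunction}. First I would record the properties of $f_{\mathcal{B}}$ that were already established (or are immediate from) the proof of \thref{thm:NormNPhardConstruction}: it is continuous; its output layer applies an $f_0$ block, so $0 \le f_{\mathcal{B}} \le 1$ everywhere by \thref{def:F0}; it vanishes identically when $\mathcal{B}$ is unsatisfiable; and it takes the value $1$ at every satisfying Boolean assignment $x^\star \in \{0,1\}^p$. Put $g := \max(f_{\mathcal{B}} - f_T, 0)$ with $f_T(x) = \|x\|_1 \cdot (1+\operatorname{dim}(x))^{-1}$; then $g$ is continuous and $0 \le g \le 1$.

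For the finiteness claim I would argue that $g$ has bounded support. Since $f_T \ge 0$, the inequality $g(x) > 0$ forces $f_{\mathcal{B}}(x) > f_T(x) = \|x\|_1 (1+p)^{-1}$, and because $f_{\mathcal{B}} \le 1$ this in turn forces $\|x\|_1 < 1+p$. Hence $g$ is supported inside the bounded $\ell_1$-ball $\{x : \|x\|_1 < 1+p\}$, which has finite Lebesgue measure; combined with $g \le 1$ this yields $\|g\|_2^2 \le \operatorname{vol}\{x : \|x\|_1 < 1+p\} < \infty$, and $g$ is integrable (so it defines a finite measure), for every $\mathcal{B}$ — satisfiable or not.

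For the equivalence, the unsatisfiable direction is immediate: if $\mathcal{B}$ is unsatisfiable then $f_{\mathcal{B}} \equiv 0$, so $f_{\mathcal{B}} - f_T = -f_T \le 0$ and thus $g \equiv 0$, giving $\|g\|_2 = 0$. For the converse, take a satisfying $x^\star \in \{0,1\}^p$; since $\|x^\star\|_1 \le p$ we get $f_T(x^\star) \le p(1+p)^{-1} < 1 = f_{\mathcal{B}}(x^\star)$, so $g(x^\star) \ge (1+p)^{-1} > 0$, and by continuity $g$ is strictly positive on an open neighborhood of $x^\star$, a set of positive measure; therefore $\|g\|_2 > 0$.

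The only genuine subtlety — the step I would flag — is verifying that the truncation does not kill the single positive value at a satisfying assignment. This is exactly what the normalizing factor $(1+\operatorname{dim}(x))^{-1}$ in \thref{def:SATconstructionTruncationFunction} buys: it keeps $f_T < 1$ on all of the Boolean cube $\{0,1\}^p$ while still driving $f_{\mathcal{B}} - f_T$ negative away from the origin. The companion point worth double-checking is the global bound $f_{\mathcal{B}} \le 1$ invoked in the finiteness argument, which holds because the network's output is produced by an $f_0$ block whose range is $[0,1]$ by \thref{def:F0}; everything else is routine.
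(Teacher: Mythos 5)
Your proposal is correct and follows exactly the argument the paper intends but leaves implicit (the corollary is stated without proof): the normalization $(1+\operatorname{dim}(x))^{-1}$ keeps $f_T<1$ on the Boolean cube so the value $f_{\mathcal{B}}(x^\star)=1$ at a satisfying assignment survives truncation, while $f_{\mathcal{B}}\le 1$ forces the support of $\max(f_{\mathcal{B}}-f_T,0)$ into the bounded ball $\|x\|_1<1+p$, giving finiteness; the unsatisfiable case reduces to $f_{\mathcal{B}}\equiv 0$ from \thref{thm:NormNPhardConstruction}. All the ingredients you invoke ($f_0$ has range $[0,1]$, continuity, the value at satisfying assignments) are established in the paper's proof of that lemma, so nothing is missing.
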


\subsection{Output of \texorpdfstring{$OR$}{OR} blocks\label{sec:AppendixOutputORblocks}}
\begin{lemma}\thlabel{thm:NPhardnessLemmaOutputORblocksin01}
The output of all OR blocks in the construction of the network implementing a given SAT sentence has values in the range $[0,1]$.
\end{lemma}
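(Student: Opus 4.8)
The plan is to analyze the OR block directly from its definition. Recall that for a clause with literals mapped to inputs $z_1, z_2, z_3 \in [0,1]$, the OR block computes $f_\lor(z) = \sum_{j=1}^3 f_0\!\left(\sum_{i=1}^3 z_i - j\right)$. Let me write $s = z_1 + z_2 + z_3 \in [0,3]$; then $f_\lor$ depends on $z$ only through $s$, so the task reduces to showing that $g(s) := f_0(s-1) + f_0(s-2) + f_0(s-3)$ takes values in $[0,1]$ for all $s \in [0,3]$. The first step is therefore to reduce to this one-dimensional claim.

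Next I would use the explicit triangular-pulse shape of $f_0$ established in \thref{def:F0}: $f_0$ is supported on $[-\varepsilon, \varepsilon]$, is piecewise linear, satisfies $f_0(0) = 1$, and is nonnegative with $f_0 \le 1$ everywhere (this follows from the $\varepsilon^{-1}[\max(0,x+\varepsilon) - 2\max(0,x) + \max(0,x-\varepsilon)]$ formula, which is the standard hat function of height $1$ and half-width $\varepsilon$). Since $\varepsilon < 0.5$, the three shifted pulses $f_0(\cdot - 1)$, $f_0(\cdot - 2)$, $f_0(\cdot - 3)$ have pairwise disjoint supports: they live in $[1-\varepsilon, 1+\varepsilon]$, $[2-\varepsilon, 2+\varepsilon]$, $[3-\varepsilon, 3+\varepsilon]$ respectively, and consecutive intervals are separated because $1+\varepsilon < 2-\varepsilon$. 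Hence for any $s$ at most one of the three summands is nonzero, so $0 \le g(s) \le 1$ pointwise. This is the crux of the argument and, I expect, the main (and only real) obstacle — namely checking the disjointness of supports, which is exactly where the hypothesis $\varepsilon < 0.5$ is used.

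Finally I would combine these observations: for arbitrary inputs $z \in [0,1]^3$ to any OR block in the construction, the sum $s$ lies in $[0,3]$, and by the disjoint-support argument $f_\lor(z) = g(s) \in [0,1]$. I would also remark, for completeness, that this covers every OR block appearing in the network since each clause block receives as inputs the outputs of the first-layer $f_0/f_1$ units, which lie in $[0,1]$ by \thref{def:F0}. That establishes the claim. The write-up will be short: one reduction to the scalar function $g$, one lemma-level observation about disjoint supports of triangular pulses, and the conclusion.

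One alternative, if one wishes to avoid invoking properties of $f_0$ as a black box, is to expand $g$ on each of the relevant subintervals of $[0,3]$ explicitly using the max-of-zero formula and verify the bound piecewise; but this is more tedious and the disjoint-support route is cleaner, so I would present the latter and mention the former only in passing.
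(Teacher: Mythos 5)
Your proof is correct, and it takes a genuinely cleaner route than the paper's. The paper analyzes the composite $F(X)=\sum_{j=1}^3 f_0\bigl(\sum_i f_1(X_i)-j\bigr)$ over $X\in\mathbb{R}^3$ via a four-way case split on how many coordinates $X_i$ fall in $(1-\varepsilon,1+\varepsilon)$, computing explicit piecewise formulas for $F$ in each case and then checking, case by case, that the active regions of the three pulses do not overlap. You instead observe that the OR block depends on its inputs only through $s=\sum_i z_i$, reduce to the scalar function $g(s)=f_0(s-1)+f_0(s-2)+f_0(s-3)$, and invoke the disjointness of the supports $[j-\varepsilon,\,j+\varepsilon]$ (which is exactly where $\varepsilon<\tfrac12$ enters, via $1+\varepsilon<2-\varepsilon$) together with $0\le f_0\le 1$ to conclude that at most one summand is nonzero at any point. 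This isolates the single fact the paper's case analysis repeatedly re-derives ("the regions do not overlap") and in fact proves the bound for all $s\in\mathbb{R}$, so the restriction of the inputs to $[0,1]$ is not even needed; your remark that the first-layer $f_0/f_1$ units output values in $[0,1]$ ties the scalar claim back to the blocks as they actually appear in the network, matching the lemma as stated. The only thing the paper's longer computation buys is the explicit piecewise-linear description of $F$ on each subregion, which is not used elsewhere in the argument, so nothing is lost by your shorter proof.
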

\begin{proof}
Following the steps of Proposition 3, this function is defined for $X \in \mathbb{R}^3$ and:
\begin{align}
F(X) &= f_0(\sum_i f_1(X_i)- 1) + f_0(\sum_i f_1(X_i) - 2) \nonumber\\
&+ f_0(\sum_i f_1(X_i) - 3)
\end{align}
To compute the values of $F$ over $\mathbb{R}^3$, we consider two cases for every $X_i$: $X_i \in (1-\varepsilon, 1+\varepsilon)$ and $X_i \notin (1-\varepsilon, 1+\varepsilon)$.

\paragraph{Case 1: all $X_i \notin (1-\varepsilon, 1+\varepsilon)$:} In this case, we have $\sum_i f_1(X_i) = 0$. Therefore,$|\sum_i f_1(X_i) - k| > \varepsilon, \forall k \in \{1,2,3\}$, and $F(X) = 0.$

\paragraph{Case 2: only one $X_i \in (1-\varepsilon, 1+\varepsilon)$:}
Without loss of generality, we suppose that $X_1 \in (1-\varepsilon, 1+\varepsilon)$ and $X_{2,3} \notin (1-\varepsilon, 1+\varepsilon)$. Thus:
\begin{equation}
\sum_i f_1(X_i) = 1 - \frac{1}{\varepsilon} |X_1 - 1|.
\end{equation}
Thus, we have $\sum_i f_1(X_i) - 2 < -1$, $\sum_i f_1(X_i) - 3 < -2$, and $\sum_i f_1(X_i) - 1 < -\varepsilon \iff |X_1 - 1| > \varepsilon^2$. Therefore: 
\begin{equation}
F(X) = 
\begin{cases}
1 - \frac{1}{\varepsilon^2} |X_1-1|, \text{ for }0\leq |X_1-1| \leq \varepsilon^2 \\
0,\text{ otherwise}. 
\end{cases}
\end{equation}
\paragraph{Case 3: two $X_i \in (1-\varepsilon, 1+\varepsilon)$:} Suppose $X_{1,2} \in (1-\varepsilon, 1+\varepsilon)$. We have then:
\begin{equation}
\sum_i f_1(X_i) = 2 - \frac{1}{\varepsilon} |X_1 - 1| - \frac{1}{\varepsilon} |X_2 - 1|.
\end{equation}
Therefore: 
\begin{enumerate}
\item $\sum_i f_1(X_i) - 3 < -1,$
\item \begin{equation}
|\sum_i f_1(X_i) - 2| < \varepsilon \iff |X_1 - 1| + |X_2 - 1| < \varepsilon^2
\label{eq:SubRegion3_1}
\end{equation}
\item \begin{equation}
|\sum_i f_1(X_i) - 1| < \varepsilon \iff \varepsilon - \varepsilon^2 < |X_1 - 1| + |X_2 - 1| < \varepsilon+\varepsilon^2 
\label{eq:SubRegion3_2}
\end{equation}
\end{enumerate}
The resulting function values are then: 
\begin{equation}
F(X) = 
\begin{cases}
1 - \frac{1}{\varepsilon^2} |X_1-1|  - \frac{1}{\varepsilon^2} |X_2-1|, \text{ for } X_{1,2} \in \eqref{eq:SubRegion3_1} \\
1 - \frac{1}{\varepsilon} |1 - \frac{1}{\varepsilon} |X_1 - 1| - \frac{1}{\varepsilon} |X_2 - 1||, \text{ for } X_{1,2} \in \eqref{eq:SubRegion3_2} \\
0,\text{ otherwise}. 
\end{cases}
\end{equation}
As $\varepsilon < \frac{1}{2}$, the regions \eqref{eq:SubRegion3_1} and \eqref{eq:SubRegion3_2} do not overlap.
\paragraph{Case 4: all $X_i \in (1-\varepsilon, 1+\varepsilon)$:}
We have then:
\begin{equation}
\sum_i f_1(X_i) = 3 - \frac{1}{\varepsilon} |X_1 - 1| - \frac{1}{\varepsilon} |X_2 - 1| - \frac{1}{\varepsilon} |X_3 - 1|.
\end{equation}

Therefore 
\begin{enumerate}
\item \begin{align}
&|\sum_i f_1(X_i) - 2| < \varepsilon \nonumber\\ 
&\iff  |X_1 - 1| + |X_2 - 1| + |X_2 - 1| < \varepsilon^2
\label{eq:SubRegion4_1}
\end{align}
\item \begin{align}
&|\sum_i f_1(X_i) - 2| < \varepsilon \nonumber\\
&\iff\varepsilon - \varepsilon^2 < |X_1 - 1| + |X_2 - 1|+ |X_3 - 1|  < \varepsilon+\varepsilon^2 
 \label{eq:SubRegion4_2}
\end{align}
\item \begin{align}
&|\sum_i f_1(X_i) - 1| < \varepsilon \nonumber\\
&\iff 2\varepsilon - \varepsilon^2 < |X_1 - 1| + |X_2 - 1|+ |X_3 - 1|  < 2\varepsilon+\varepsilon^2 
 \label{eq:SubRegion4_3}
\end{align}
\end{enumerate}

The resulting function values are then 
\begin{equation}
F(X) = 
\begin{cases}
1 - \frac{1}{\varepsilon^2} \sum_i |X_i - 1|, \text{ for } X_{1,2,3} \in \eqref{eq:SubRegion4_1} \\
1 - \frac{1}{\varepsilon} |1 - \frac{1}{\varepsilon}\sum_i |X_i - 1||, \text{ for } X_{1,2,3} \in \eqref{eq:SubRegion4_2} \\
1 - \frac{1}{\varepsilon} |2 - \frac{1}{\varepsilon}\sum_i |X_i - 1||, \text{ for } X_{1,2,3} \in \eqref{eq:SubRegion4_3} \\
0,\text{ otherwise}. 
\end{cases}
\end{equation}
Again, as $\varepsilon < \frac{1}{2}$, the regions \eqref{eq:SubRegion4_1}, \eqref{eq:SubRegion4_2} and \eqref{eq:SubRegion4_3} do not overlap.
Finally,
\begin{equation}
\forall X\in  \mathbf{R}^3, 0 \leq F(X)\leq 1.
\end{equation}
\end{proof}

\section{Weight decay does not define a function norm}\label{sec:AppendixWD}

It is straightforward to see that weight decay, i.e.\ the norm of the weights of a network, does not define a norm of the function determined by the network.  Consider a layered network
\begin{equation}
f(x) = W_d \sigma( W_{d-1} \sigma( \dots \sigma( W_1 x) \dots )).
\end{equation}
where the non-linear activation function can be e.g.\ a ReLU.
The $L_2$ weight decay complexity measure is
\begin{equation}
\sum_{i=1}^{d} \|W_i \|_{\operatorname{Fro}}^2,
\end{equation}
where $\|\cdot \|_{\operatorname{Fro}}$ is the Frobenius norm.
A simple counter-example to show this cannot define a function norm is to set any of the matrices $W_j = \mathbf{0}$ and $f(x)=0$ for all $x$.  However $\sum_{i=1}^{d} \|W_i \|_{\operatorname{Fro}}^2$ can be set to an arbitrary value by changing the $W_i$ for $i\neq j$ although this does not change the underlying function.

\section{Proof of \texorpdfstring{\thref{th:generalization}}{the generalization bound}}\label{sec:AppendixProofGeneralizationBound}

\begin{proof}
In the following, $P$ is the marginal input distribution
\begin{equation}
P(x) = \int\! P(x,y) \,\mathrm{d}y
\end{equation}
We first suppose that $\mathcal{X}$ is bounded, and that all the activations of the network are continuous, so that any function $f$ represented by the network is continuous. Furthermore, if the magnitude of the weights are bounded (this condition will be subsequently relaxed), without further control we know that:
\begin{equation}
\exists L>0, \forall f \in \mathcal{H}, \forall x \in \mathcal{X}, \|f(x)\|_2 \le L,
\end{equation}
and supposing $\ell\ K$-Lipschitz continuous with respect to its first argument and under the $L_2$-norm, we have:
\begin{equation}
\forall x \in \mathcal{X}, |\ell(f(x),y) - \ell(0,y)| \le KL,
\end{equation}
and 
\begin{equation}
|\ell(f(x),y)| \le KL + |\ell(0,y)|.
\end{equation}
If we suppose $\mathcal{Y}$ bounded as well, then:
\begin{equation}
\exists C > 0, \forall (x,y) \in \mathcal{X}\times\mathcal{Y}, |\ell(f(x),y)| \le KL + C.
\end{equation}
Under these assumptions, using the Hoeffding inequality~\citep{hoeffding1963probability}, we have with probability at least 1-$\delta$:
\begin{equation}
\mathcal{R}(f) \le \hat{\mathcal{R}}(f)  + (KL + C)\sqrt{\frac{2\ln \frac{2}{\delta}}{n}}. %, \ \tilde{L} =  KL + C.
\end{equation}
When $n$ is large, this inequality insures a control over the generalization error when applied to $f_*$. However, when $n$ is small, this control can be insufficient. We will show in the following that under the constraints described above, we can further bound the generalization error by replacing $KL + C$ with a term that we can control.

In the the sequel, we consider $f$ verifying the conditions~\eqref{eq:Conditions}, while releasing the boundedness of $\mathcal{X}$ and the weights of $f$. Using Chebyshev's inequlity, we have with probability at least 1-$\delta$:
\begin{equation}
\forall x \in \mathcal{X}, |\|f(x)\|_2 - \mathbb{E}_{\nu\sim P}[\|f(\nu)\|_2]| \le \frac{\sigma_{f,P}}{\sqrt{\delta}},\ \text{where } \sigma_{f,P}^2 = \operatorname{var}_{\nu\sim P}(\|f(\nu)\|_2), 
\end{equation}
and 
\begin{equation}
\|f(x)\|_2 \le \frac{\sigma_{f,P}}{\sqrt{\delta}} + \mathbb{E}_{\nu\sim P}[\|f(\nu)\|_2].
\end{equation}
We have on the right-hand side of this inequality
\begin{align}
 \mathbb{E}_{\nu\sim P}[\|f(\nu)\|_2] &=\int\! \|f(\nu)\|_2 P(\nu)\, \mathrm{d}\nu% \\
% &=  \int\! \|f(\nu)\|_2 \frac{P(\nu)}{Q(\nu)}Q(\nu)\, \mathrm{d}\nu \\
\le  \underbrace{
\left( \int\! \|f(\nu)\|_2^2 Q(\nu)\, \mathrm{d}\nu \right)^\frac{1}{2}
}_{\|f\|_{2,Q}}
\left( \int\! \frac{P(\nu)^2}{Q(\nu)^2}Q(\nu)\, \mathrm{d}\nu \right)^\frac{1}{2}%\\
%&
%\le \|f\|_{2,Q} \left( \int\! \frac{P(\nu)}{Q(\nu)} P(\nu)\, \mathrm{d}\nu \right)^\frac{1}{2}
\end{align} 
using the Cauchy-Schwartz inequality. 
Similarly, we can write
\begin{align}
\sigma_{f,P}^2 &\le \int\! \|f(\nu)\|_2^2 P(\nu)\, \mathrm{d}\nu %\\
 % &=  \int\! \|f(\nu)\|_2^2 \frac{P(\nu)}{Q(\nu)}Q(\nu)\, \mathrm{d}\nu \\
 %&
 \le \left( \int\! \|f(\nu)\|_2^4 Q(\nu)\, \mathrm{d}\nu \right)^\frac{1}{2}\left( \int\! \left(\frac{P(\nu)}{Q(\nu)}\right)^2Q(\nu)\, \mathrm{d}\nu \right)^\frac{1}{2}\\
 &= \left(\operatorname{var}_{z \sim Q}(\|f_*(z)\|_2^2) + \mathbb{E}_{z \sim Q}(\|f_*(z)\|_2^2)^2\right)^\frac{1}{2} \left( \int\! \frac{P(\nu)}{Q(\nu)}P(\nu)\, \mathrm{d}\nu \right)^\frac{1}{2}\\
 &\le (A+B) \left( \int\! \frac{P(\nu)}{Q(\nu)} P(\nu)\, \mathrm{d}\nu \right)^\frac{1}{2}
\end{align}
To summarize, denoting $\mathcal{D}_P(P\|Q) =  \int\! \frac{P(\nu)}{Q(\nu)} P(\nu)\, \mathrm{d}\nu$, we have with probability at least 1-$\delta$, for any $x \in \mathcal{X}$ and $f$ satisfying~\eqref{eq:Conditions}:
\begin{equation}
\|f(x)\|_2 \le \frac{ (A+B)^\frac{1}{2} \mathcal{D}_P(P\|Q)^\frac{1}{4}  }{\sqrt{\delta}} + A^\frac{1}{2} \mathcal{D}_P(P\|Q)^\frac{1}{2} 
\end{equation}
Therefore, with probability at least $(1-\delta)^2$, 
\begin{equation}
\mathcal{R}(f) \le \hat{\mathcal{R}}(f)  + %\tilde{L}(A,B,  \mathcal{D}_P(P\|Q))
\underbrace{\left(K\left[\frac{ (A+B)^\frac{1}{2} \mathcal{D}_P(P\|Q)^\frac{1}{4}  }{\sqrt{\delta}} + A^\frac{1}{2} \mathcal{D}_P(P\|Q)^\frac{1}{2}\right] + C\right)}_{=:\tilde{L}(A, B, \operatorname{D_{P}}(P\|Q))}
\sqrt{\frac{2\ln \frac{2}{\delta}}{N}}. 
\end{equation}
$C$ is fixed and depends only on the loss function (e.g.\ for the cross entropy loss,  $C$ is the logarithm of the number of classes).  We note that $\tilde{L}(A,B, \mathcal{D}_{P}(P\|Q))$ is continuous and increasing in its arguments which finishes the proof.
\end{proof}

\section{Variational autoencoders} \label{sec:AppendixVAE}

\begin{figure}
\centering
\includegraphics[width=0.3\textwidth]{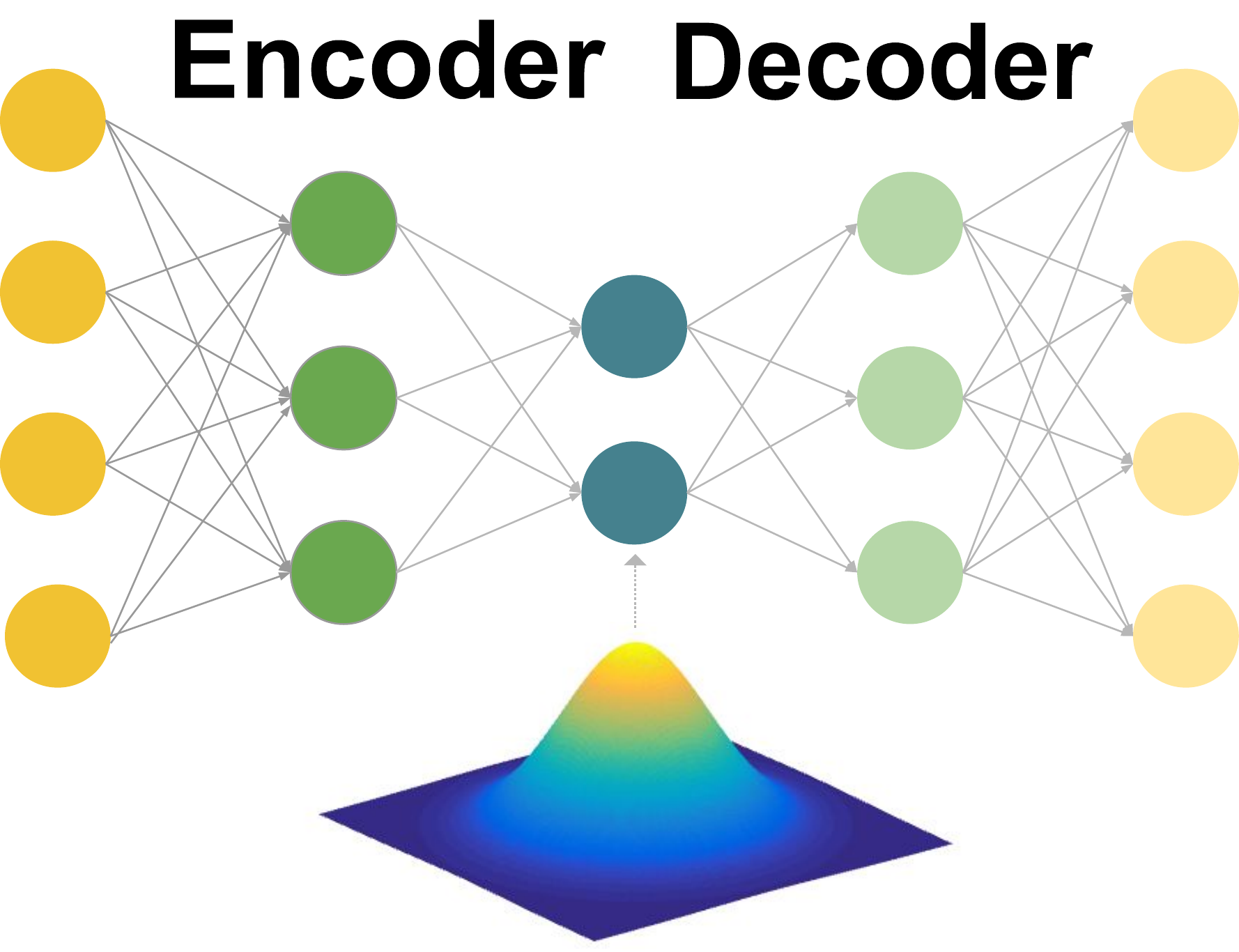}
\caption{VAE architecture\label{fig:VAE}}
\end{figure}
To generate samples for DNNs regularization, we choose to train VAEs on the training data. The chosen architecture is composed of two hidden layers for encoding and decoding. Figure~\ref{fig:VAE} displays such an architecture. For each of the datasets, the size of the hidden layers is set empirically to ensure convergence. The training is done with ADAM~\cite{kingma2014adam}. 
%The VAE of IBSR data has 512 and 10 nodes in the first and second hidden layer respectively, and is trained during 1000 epochs. 
As the latent space is mapped to a normal distribution, it is customary to generate the samples by reconstructing a normal noise. In order to have samples that are close to the data distribution but have a slightly broader support, we sample a normal variable with a higher variance. In our experiments, we multiply the variance by 2.

\section{Weighted Sobolev norms}

We may analogously consider a weighted Sobolev norm:
\begin{definition}[Weighted Sobolev norm]
\begin{align}
\|f\|_{H_2,Q}^2 =& \|f\|_{2,Q}^2 + \|\nabla_x f\|_{2,Q}^2 \\
=& \mathbb{E}_{x \sim Q}( \|f(x)\|_2^2 +  \|\nabla_x  f(x)\|_2^2 )
\end{align}
\end{definition}

\subsection{Computational complexity of weighted
\texorpdfstring{$L_2$}{L2} 
vs.\ Sobolev regularization}

We restrict our analysis of the computational complexity of the stochastic optimization to a single step as the convergence of stochastic gradient descent will depend on the variance of the stochastic updates, which in turn depends on the variance of $P$.

For the weighted $L_2$ norm, the complexity is simply a forward pass for the regularization samples in a given batch.  The gradient of the norm can be combined with the loss gradients into a single backward pass, and the net increase in computation is a single forward pass.

The picture is somewhat more complex for the Sobolev norm.  The first term is the same as the $L_2$ norm, but the second term penalizing the gradients introduces substantial additional computational complexity with computation of the exact gradient requiring a number of backpropagation iterations dependent on the dimensionality of the inputs.  We have found this to be prohibitively expensive in practice, and instead penalize a directional gradient in the direction of  $\varepsilon$, a random unitary vector that is resampled at each step to ensure convergence of stochastic gradient descent.

\subsection{Comparative performance of the Sobolev and $L2$ norm on MNIST}

Figure~\ref{fig:MNIST_sobo} displays the averaged curves and error bars on MNIST in a low-data regime for various regularization methods for the same network architecture and optimization hyperparameters. Comparisons are made between $L_2$, Sobolev, gradient (i.e.\ penalizing only the second term of the Sobolev norm), weight decay, dropout, and batch normalization.  In all cases, $L_2$ and Sobolev norms perform similarly, significantly outperforming the other methods.  

\begin{figure*}[ht]
\centering
\begin{subfigure}{0.3\textwidth}
\centering
\includegraphics[width=\columnwidth]{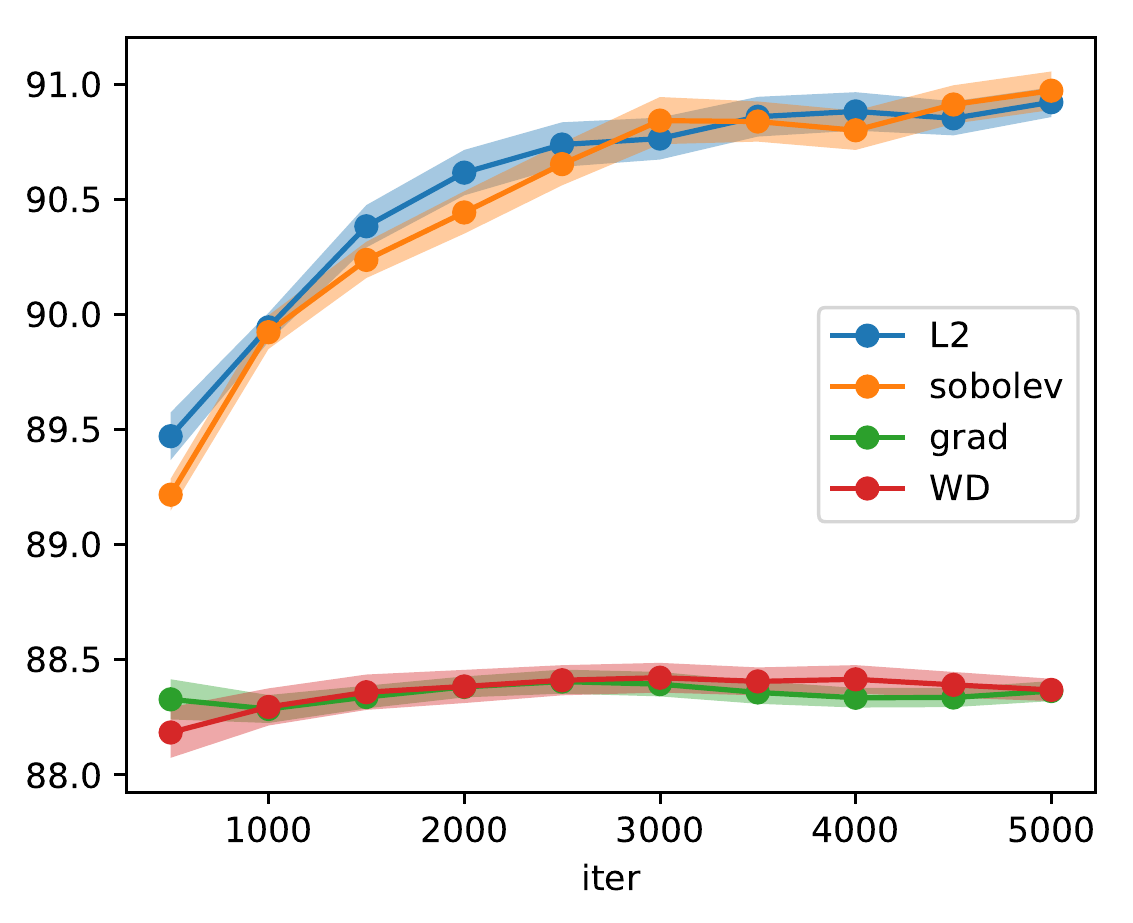}
\caption{Weighted norms vs.\ weight decay - no dropout}
\end{subfigure} \qquad \begin{subfigure}{0.3\textwidth}
\centering
\includegraphics[width=\columnwidth]{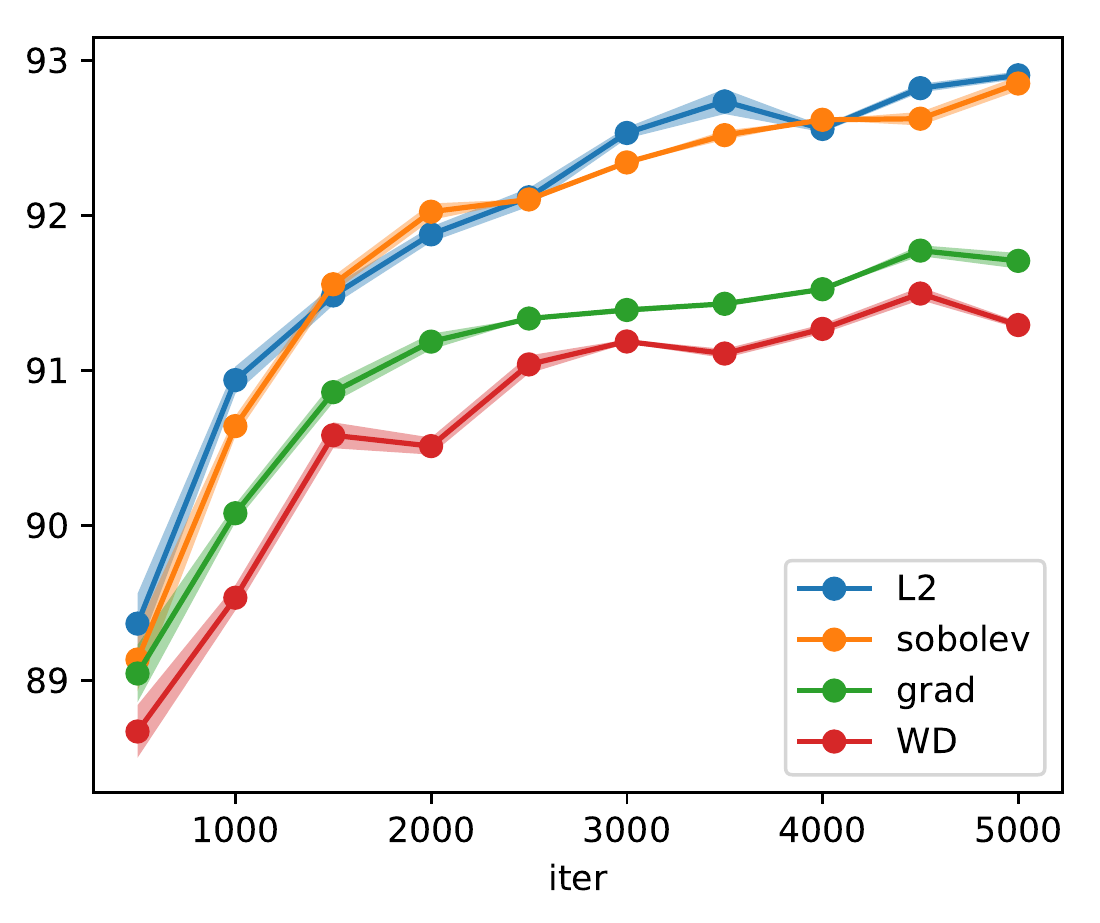}
\caption{Weighted norms vs.\ weight decay - with dropout}
\label{fig:MNIST_wd_do}
\end{subfigure}
\caption{Performance of weighted function norm regularization on MNIST in a low sample regime. In (a), we compare the regularizations when used without dropout. In (b), we compare them when used with dropout. The performance is averaged over 10 trials, training on different subsets of 300 samples, with a batch size for the regularization equal to 10 time the training batch, and a regularization parameter of 0.01. The regularization samples are sampled using a variational autoencoder.}
\label{fig:MNIST_sobo}
\end{figure*}

\end{document}